\documentclass{article}
\usepackage{iclr2027_conference,times}
\usepackage[T1]{fontenc}
\usepackage{amsmath,amssymb,amsthm,mathtools}
\usepackage{graphicx,xcolor,booktabs,tabularx,array}
\usepackage{float}
\usepackage{multirow}
\usepackage{wrapfig}
\usepackage{needspace}
\usepackage{microtype}
\usepackage[section]{placeins}
\usepackage{flafter}
\usepackage{caption}
\usepackage{hyperref}
\usepackage{url}

\iclrfinalcopy
\title{CAST: Reconstruction-Coupled Acceleration\\
of Interactive World Models}
\author{
  Leyang Chen\thanks{Equal contribution},\enspace
  Junyi Wu\footnotemark[1],\enspace
  Fanqing Kong\footnotemark[1],\enspace
  Shaoqiu Zhang,\enspace
  Yulun Zhang\thanks{Corresponding author: Yulun Zhang, yulun100@gmail.com}\\
  Shanghai Jiao Tong University\\
}
\hypersetup{pdftitle={CAST: Reconstruction-Coupled Acceleration of Interactive World Models},pdfauthor={Leyang Chen, Junyi Wu, Shaoqiu Zhang, Fanqing Kong, Yulun Zhang},colorlinks=true,linkcolor=blue,citecolor=blue,urlcolor=blue}
\newcommand{\method}{\textsc{CAST}}
\newcommand{\fc}{PAR}
\newcommand{\norm}[1]{\left\lVert#1\right\rVert}
\newcommand{\conj}[1]{\overline{#1}}
\DeclareMathOperator{\TopK}{TopK}
\DeclareMathOperator{\RFFT}{RFFT2}

\DeclareMathOperator{\Arg}{Arg}
\DeclareMathOperator{\RMS}{RMS}
\DeclareMathOperator{\wrap}{wrap}
\newtheorem{proposition}{Proposition}
\floatstyle{ruled}
\newfloat{algorithm}{tbp}{loa}
\floatname{algorithm}{Algorithm}
\newcolumntype{Y}{>{\raggedright\arraybackslash}X}

\begin{document}
\fussy
\emergencystretch=1em
\parfillskip=0pt plus 1fil
\setlength{\abovedisplayskip}{4pt plus 1pt minus 1pt}
\setlength{\belowdisplayskip}{4pt plus 1pt minus 1pt}
\flushbottom
\clubpenalty=1000
\widowpenalty=1000
\displaywidowpenalty=1000
\maketitle
\lhead{Preprint}
\vspace{-12pt}
\noindent\begin{minipage}{\linewidth}
\centering
\includegraphics[width=0.92\linewidth]{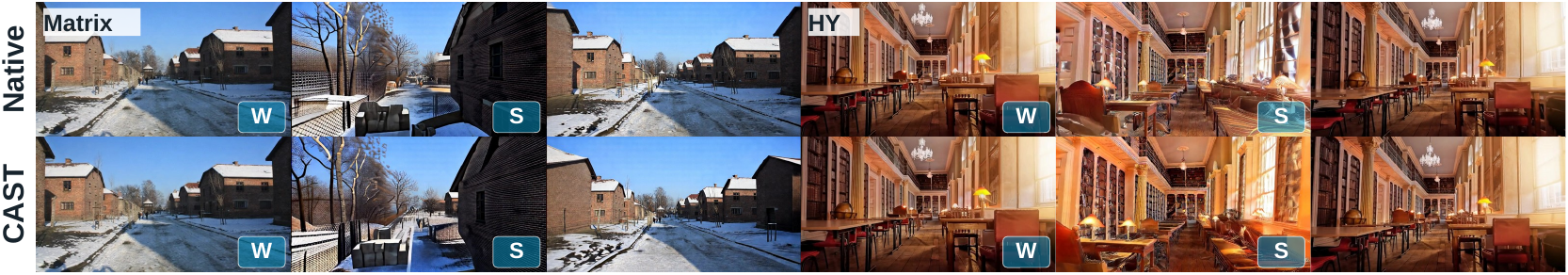}
\captionsetup{type=figure,hypcap=false}
\caption{\textbf{Action-conditioned rollouts.} Native (top) and our method CAST (bottom) on Matrix-Game 3.0 (left) and HY-World 1.5 (right). CAST maintains visual fidelity close to Native while achieving $2.15\times$ and $3.48\times$ speedups, respectively (Figure~\ref{fig:motivation_bc}); W/S denote forward/backward movement.}
\label{fig:motivation_a}
\end{minipage}\par\vspace{-1pt}
\begin{abstract}
\setlength{\leftskip}{-1em}
\setlength{\rightskip}{-1em}
Interactive world models must respond quickly to controls while preserving scene consistency. Existing acceleration methods can miss heterogeneous control responses and spatial transport when recovering skipped features. We observe that interaction-induced feature changes correlate with approximation error, while low-frequency interpolation errors are phase-sensitive and show more predictable phase progression. These findings motivate CAST, a reconstruction-coupled inference framework. CAST selects anchors by interaction sensitivity and cross-layer coverage, reconstructs skipped residuals with frequency- and confidence-aware Phase-Aware Reconstruction (PAR), and coordinates historical KV routing according to downstream reconstruction responsibility. On Matrix-Game 3.0 and HY-World 1.5, CAST achieves $2.15\times$ and $3.48\times$ speedups, respectively, while maintaining visual quality close to Native (Figure~\ref{fig:motivation_a}). It also attains the highest VBench scores among compared methods and leads non-native baselines on seven and six of thirteen WorldMark dimensions, demonstrating a balance of generation speed, visual quality, and interactive responsiveness under real-time control. Code is available at \url{https://github.com/lokiniuniu/CAST}.
\end{abstract}
\vspace{-6pt}
\section{Introduction}
\label{sec:intro}
Interactive world models turn visual generation into a feedback loop: users act and the model generates the next observation~\citep{bruce2024genie,valevski2024gamengen}. Recent systems combine streaming generation, camera and action conditioning, and visual history~\citep{wang2026matrix,sun2025worldplay}. Their quality depends on responsive controls and consistent revisits, beyond plausible frames alone~\citep{xu2026worldmark}. Inference latency remains a practical obstacle to interactive use~\citep{lu2026light}, making efficiency part of the modeling problem.

Efficient sampling algorithms reduce the number of denoising-network evaluations required for generation~\citep{song2020ddim,lu2022dpmsolver}. Distillation and consistency models further enable few-step generation~\citep{salimans2022progressive,song2023consistency}, including autoregressive video synthesis~\citep{yin2025causvid,huang2025selfforcing}. Yet each network evaluation can remain costly even with few denoising steps~\citep{tang2026fis}. CAST targets this remaining cost by reducing computation within each evaluation. Training-free methods exploit redundancy by caching features across denoising steps~\citep{ma2023deepcache,liu2024teacache,lyu2025fastercache}, computing selected frames and interpolating their neighbors~\citep{tang2026fis}, or reducing historical attention~\citep{lu2026light,li2026sol}. These strategies offer complementary savings, although few-step inference limits cross-step reuse~\citep{tang2026fis}. In our setting, the approximations are connected: computed frames supply the endpoints for reconstructing skipped updates, so errors in their historical attention can also propagate beyond anchor frames themselves through reconstruction (Section~\ref{sec:nonlocal}).

\clearpage
\begin{wrapfigure}{r}{0.46\textwidth}
\vspace{-10pt}
\centering
\includegraphics[width=\linewidth]{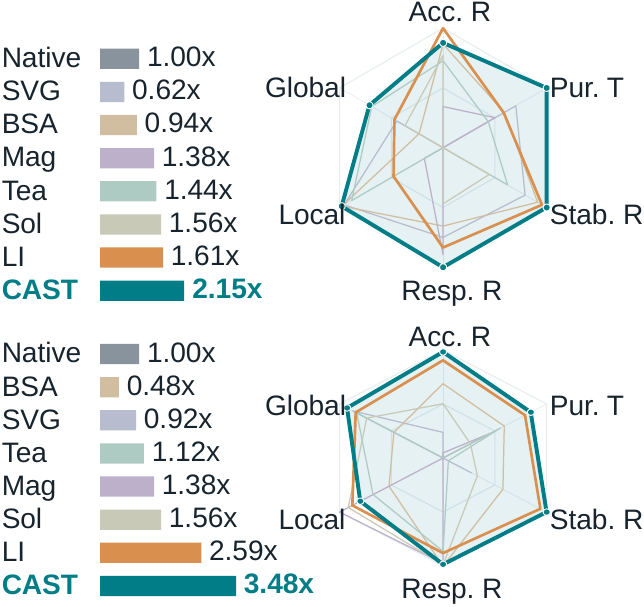}
\captionsetup{type=figure,hypcap=false}
\caption{\textbf{Speed and interactive quality.} Speedup over Native (left) and selected WorldMark dimensions (right), min--max normalized per backbone across all methods including Native; colors match the speedup bars.}
\label{fig:motivation_bc}
\vspace{-10pt}
\end{wrapfigure}

This dependency suggests organizing acceleration around the recovery of skipped states. Prior work shows that motion content and token sensitivity can guide nonuniform computation~\citep{kahatapitiya2024adacache,zou2024toca}. We ask whether control responses provide a useful signal for selecting frame anchors. Figure~\ref{fig:empirical_motivation} shows that stronger camera- and action-induced feature responses are associated with larger next-block skipping errors. Because these responses arise in the conditioning path, they can guide allocation without first evaluating the dense residual we seek to avoid. Selection must also maintain coverage: interleaving frames across layers is important in frame-sparse inference so that exact computation will not be allocated to several fixed frames~\citep{tang2026fis}. We combine sensitivity with a refresh debt to balance responsive updates and regular coverage.

In interactive world models, moving the camera or controlling an object shifts scene content across spatial locations~\citep{wang2026matrix}. Recovering skipped updates between anchors therefore requires accounting for where content moves, beyond how feature values change. Direct interpolation blends endpoint residuals at fixed coordinates, potentially mixing different scene content rather than tracking its displacement. Phase transport offers a way to align these residuals before interpolation: locally coherent motion can be represented through phase shifts~\citep{meyer2015phase}. Our diagnostics in Figure~\ref{fig:spectral_phase_motivation} support this view: oracle phase correction substantially reduces a low-frequency interpolation-error peak, and phase progression is more concentrated at low frequencies. These findings motivate phase-aware reconstruction, with transport strength adapted to frequency and local phase confidence to limit corrections where coherent motion is poorly supported.

Reconstruction changes what an anchor needs from history. Prior analyses show that local approximation errors can have unequal downstream effects~\citep{zou2024toca}. Historical routing in Light Interaction selects blocks by query--key relevance~\citep{lu2026light}. In our setting, an anchor also supplies residuals for skipped frames, so local relevance alone does not account for this dependency. If neighboring anchors omit useful evidence together, their errors may jointly affect the reconstructed interval. We therefore coordinate their historical supports under fixed per-anchor budgets, aiming to preserve complementary evidence for both the anchors and the states they reconstruct.

Building on these observations, we introduce \method{}, a reconstruction-coupled inference framework for interactive world models. \emph{Control-Aware Frame Selection} balances interaction sensitivity with cross-layer coverage. \emph{Phase-Aware Reconstruction} recovers skipped residuals through frequency- and confidence-aware phase transport. Finally, \emph{Reconstruction-Coupled Sparse Attention} refines historical KV supports according to both local omission effects and the recovery of neighboring states. Together, these components connect where computation is retained, how omitted updates are recovered, and which history supports that recovery. Our contributions are as follows:

\begin{itemize}
\setlength{\itemsep}{3pt}
\setlength{\parsep}{0pt}
\vspace{-5pt}
\item \textbf{Control-Aware Frame Selection.} We select anchors using camera- and action-induced feature responses. Cross-layer coverage promotes regular refreshes, balancing interaction-sensitive updates with the need to maintain representations throughout the current chunk and across the entire active action horizon of each generated sequence.

\item \textbf{Phase-Aware Reconstruction.} We recover skipped residuals by aligning endpoint spectra before interpolation or one-sided extrapolation. Transport strength adapts to frequency and local phase confidence, limiting corrections when evidence for coherent motion remains uncertain across the full interactive control sequence.

\item \textbf{Reconstruction-Coupled Sparse Attention.} We coordinate neighboring anchors' historical KV supports under fixed budgets according to their local omission effects and shared responsibility for reconstructing skipped states across adjacent intervals.

\item \textbf{Evaluation on interactive world models.} As shown in Figure~\ref{fig:motivation_bc}, on Matrix-Game 3.0 and HY-World 1.5, \method{} achieves $2.15\times$ and $3.48\times$ acceleration, respectively, and leads non-native methods on seven and six of thirteen WorldMark dimensions.
\end{itemize}

\section{Related Work}
\label{sec:related}\begingroup\setlength{\parskip}{2pt}
\subsection{Interactive Video World Models}

Dreamer and IRIS learn behaviors through imagined rollouts~\citep{hafner2019dreamer,micheli2022iris}. DIAMOND introduces diffusion-based world modeling~\citep{alonso2024diamond}, GameNGen simulates games~\citep{valevski2024gamengen}, and Genie learns interactive environments through latent actions~\citep{bruce2024genie}. Video diffusion and Stable Video Diffusion provide generative foundations~\citep{ho2022videodiffusion,blattmann2023stable}; CameraCtrl and MotionCtrl add explicit camera and motion control~\citep{he2024cameractrl,wang2023motionctrl}, enabling directed changes to viewpoint and scene motion.

\looseness=-1 CausVid and Self Forcing enable few-step autoregressive generation~\citep{yin2025causvid,huang2025selfforcing}. We target interactive chunk-autoregressive models, including Matrix-Game 3.0 and WorldPlay~\citep{wang2026matrix,sun2025worldplay}, where processing frames, controls, and history makes per-evaluation latency critical for responsive online control.

\par\vspace{0pt}
\subsection{Efficient Inference for Video Diffusion Models}
Caching reuses intermediate computation across denoising steps. DeepCache and Block Caching reuse features or block outputs~\citep{ma2023deepcache,wimbauer2023blockcache}; Learning-to-Cache learns layer schedules~\citep{ma2024l2c}. TeaCache and MagCache adapt reuse to input changes, residual magnitudes, or block dynamics~\citep{liu2024teacache,ma2025magcache}. QuantCache combines hierarchical latent and layer caching with importance-guided quantization and pruning for video DiTs~\citep{wu2025quantcache}. AdaCache adapts schedules to video content~\citep{kahatapitiya2024adacache}; TaylorSeer forecasts features instead of directly reusing them~\citep{liu2025taylorseer}.

ToCa selects tokens by sensitivity and redundancy~\citep{zou2024toca}, while DuCa alternates aggressive and conservative caching~\citep{zou2026duca}. These methods establish the value of selective computation, but few-step models offer limited cross-step reuse. There are also other DiT acceleration schemes~\citep{tang2026fis,dcgen,dvd,WorldDynCache,wu2025flashedit}. 

Light Interaction combines context management, output reuse, and historical sparsity~\citep{lu2026light}; Sol Engine integrates caching, sparsity, pruning, quantization, and kernels~\citep{li2026sol}. \method{} coordinates retained frames and historical supports through their shared reconstruction responsibility.

\vspace{0pt}\subsection{Sparse Attention}
Longformer and BigBird use structured sparsity~\citep{beltagy2020longformer,zaheer2020bigbird}; Reformer and Routing Transformer route by content~\citep{kitaev2020reformer,roy2021routing}. FlashAttention and FlashAttention-2 optimize exact attention~\citep{dao2022flashattention,dao2023flash2}. For video, Sliding Tile Attention exploits local windows~\citep{zhang2025sta}, LongCat-Video uses 3D block sparsity~\citep{longcat2025video}, and Sparse VideoGen and VMoBA exploit spatial--temporal structure~\citep{xi2025svg,wu2026vmoba}. Building on pooled historical routing~\citep{lu2026light}, \method{} considers both local omission errors and their propagation to reconstructed frames. This dependency makes retained historical evidence important beyond the computed anchors themselves, linking sparse attention decisions to the accuracy of skipped-frame reconstruction.

\vspace{-4pt}
\endgroup\vspace{0pt}\section{Preliminaries and Motivation}
\label{sec:motivation}
\begin{wrapfigure}[18]{r}{0.5\textwidth}
\vspace{-16pt}\centering\includegraphics[width=\linewidth]{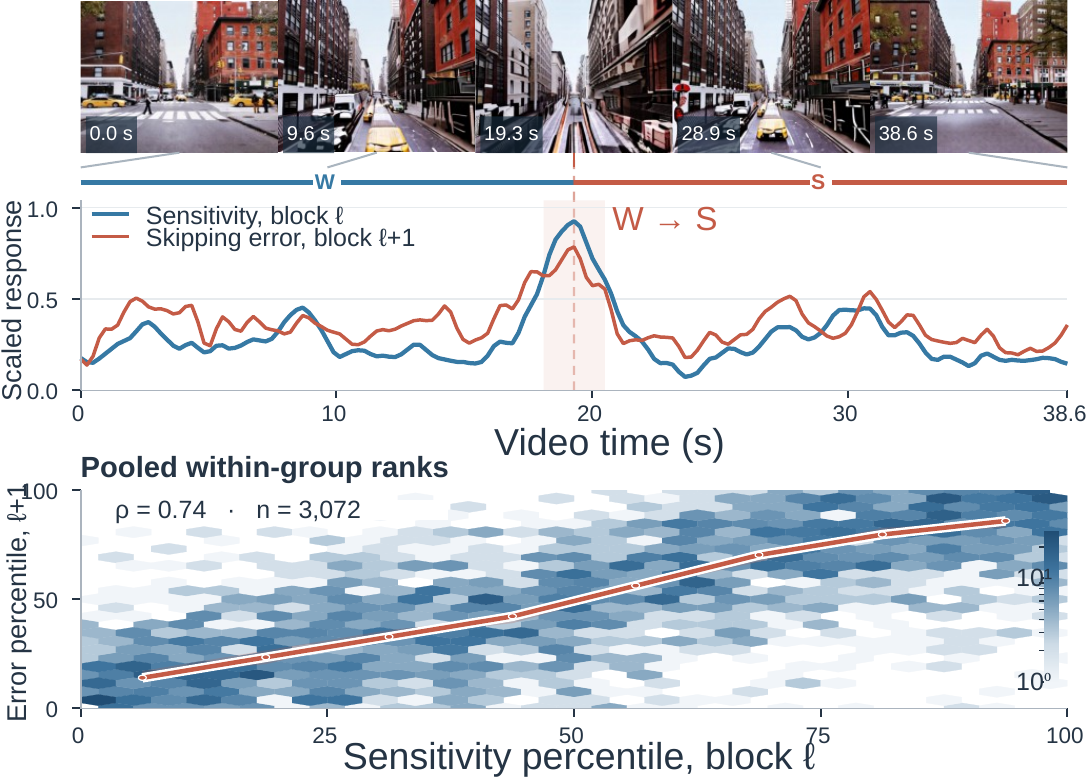}
\caption{\textbf{Control sensitivity and skipping error.} Top: native rollout. Middle: independently scaled sensitivity at block $\ell$ and error at $\ell+1$. Bottom: pooled percentiles (Spearman $\rho=0.74$, $n=3{,}072$), with counts and binned medians.}\label{fig:empirical_motivation}\vspace{-8pt}\end{wrapfigure}

\vspace{-8pt}

Our setting combines diffusion denoising~\citep{ho2020ddpm}, latent representations~\citep{rombach2021ldm}, and transformer blocks~\citep{peebles2022dit}. We study one evaluation of a latent chunk conditioned on controls and visual history. Let $R_i^\ell$ denote an eligible visual residual update for frame $i$ within transformer block $\ell$, and $\widehat R_i^\ell$ its approximation. CAST is applied at eligible visual attention and FFN residual sites, while conditioning and text updates retain native dense execution. Within each network evaluation, retained anchors are computed directly and skipped frames receive reconstructed residual updates. Historical sparsity changes the evidence available to those anchors, linking frame selection, residual recovery, and KV routing within each transformer block.

\par\WFclear\newpage\subsection{Interaction Sensitivity Predicts Frame Approximation Error}
\label{sec:sensitivity}
Let $\mathcal K_\ell\subseteq\{\mathrm{cam},\mathrm{act}\}$ contain the available control branches, with feature updates $\Delta_{i,k}^\ell$. For $\mathcal K_\ell\ne\varnothing$, define control sensitivity and relative approximation error by\par\WFclear
\begin{equation}
d_i^\ell=\frac{1}{|\mathcal K_\ell|}\sum_{k\in\mathcal K_\ell}\operatorname{Norm}(\Delta_{i,k}^\ell),\qquad
e_i^\ell=\frac{\norm{R_i^\ell-\widehat R_i^\ell}_2}{\norm{R_i^\ell}_2+\epsilon}.
\label{eq:sensitivity}
\end{equation}
Here $\operatorname{Norm}$ rescales update magnitudes across the current chunk. Sensitivity is available from the conditioning path; error requires a dense reference and is used only for diagnosis.

Figure~\ref{fig:empirical_motivation} compares $d_i^\ell$ with $e_i^{\ell+1}$, matching the scheduler's one-block prediction lag. Both curves rise near the W-to-S transition; pooled percentiles show positive Spearman correlation ($\rho=0.74$). Because the temporal curves are independently rescaled, their proximity indicates similar temporal variation, not equal magnitudes. The spread in the percentile plot also shows that sensitivity does not fully determine error. This motivates sensitivity-based allocation balanced by cross-layer coverage.

\subsection{Spectral Error and Frequency-Dependent Phase Transport}
\label{sec:spectralmotivation}
\Needspace{235pt}
\begin{wrapfigure}[19]{l}{0.48\textwidth}
\vspace{-2pt}
\centering
\includegraphics[width=\linewidth]{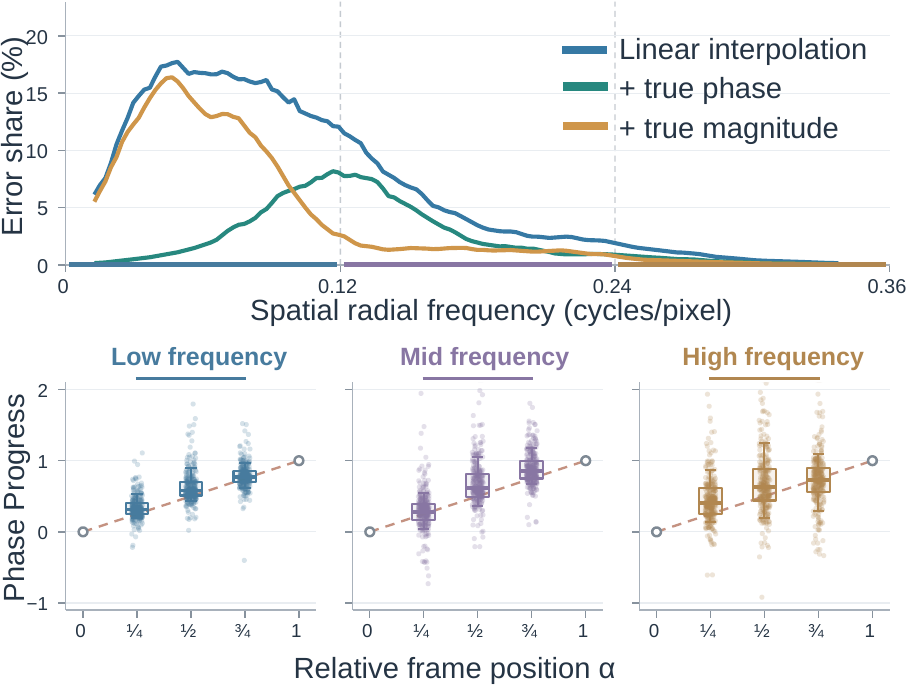}
\caption{\textbf{Spectral error and phase progression.} Top: interpolation error with oracle phase/magnitude corrections. Bottom: phase progress across frequency bands; dashed lines denote uniform progression.}
\label{fig:spectral_phase_motivation}
\end{wrapfigure}
For a skipped frame with $a<t<b$, linear interpolation uses $R_t^{\mathrm{lin}}=(1-\alpha)R_a+\alpha R_b$, where $\alpha=(t-a)/(b-a)$. It mixes fixed coordinates even when content moves. Figure~\ref{fig:spectral_phase_motivation} separates alignment and amplitude errors using oracle corrections that replace either phase or magnitude with its reference value while retaining the other interpolated component (Appendix~\ref{app:diagnostics}).

The upper plot shows a low-frequency error peak largely removed by phase correction, while magnitude correction leaves substantial error. This supports alignment correction in that band. The advantage varies across frequencies, and error share alone does not establish difficulty relative to target-band energy at each frequency.

The lower panels show tighter phase progression at low frequencies and broader distributions at high frequencies. Coherent translation gives linear unwrapped phase at every frequency~\citep{meyer2015phase}; the observed spread reflects reliability, not an intrinsic low-frequency rule. Thus \fc{} prioritizes alignment-sensitive components while tempering transport with local confidence.

\subsection{Reconstruction Dependency Implies Coupled Attention}
\label{sec:nonlocal}
Once two anchors reconstruct an interval, attention approximation at either endpoint can affect every skipped state. For $\widehat R_t=\mathcal P_t(R_a,R_b)$, anchor perturbations induce
\begin{equation}
\Delta_t^{\mathrm{rec}}=\mathcal P_t(R_a+\delta R_a,R_b+\delta R_b)-\mathcal P_t(R_a,R_b).
\label{eq:nonlocal}
\end{equation}
The interval error $\sum_{t\in(a,b)}\norm{\Delta_t^{\mathrm{rec}}}^2$ depends on both endpoints and the reconstruction operator. For linear interpolation, $\Delta_t^{\mathrm{rec}}=(1-\alpha)\delta R_a+\alpha\delta R_b$: the squared error includes a cross term, so endpoint errors can reinforce or offset each other. Choosing historical KV supports independently therefore need not minimize interval error, even when each anchor is locally well approximated. This motivates coordinating supports under fixed per-anchor budgets to protect both anchors and reconstructed states. Section~\ref{sec:mechanisms} tests this dependency at matched frame and historical KV budgets.

\vspace{-5pt}
\par
\section{Method}
\label{sec:method}
\enlargethispage{\baselineskip}

\begin{figure}[t]
\centering
\includegraphics[width=\linewidth]{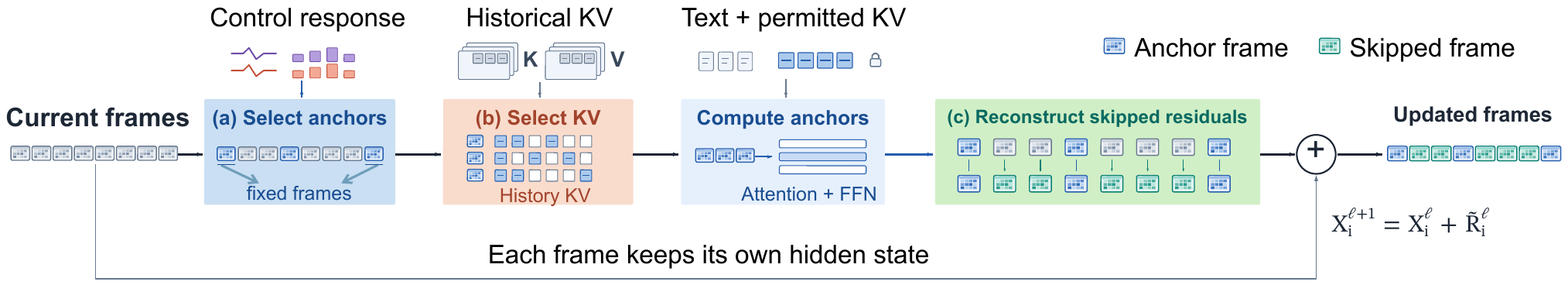}
\caption{\textbf{CAST pipeline.} Select anchors and historical KV blocks, compute anchor residuals, and reconstruct skipped updates while retaining each frame's own skip connection.}
\label{fig:overview}
\end{figure}
\vspace{-3pt}
\subsection{Overview and Problem Formulation}
\label{sec:overview}
\vspace{-3pt}
Write the block input as $X^\ell=[X^{\mathrm{mem},\ell},X^{\mathrm{cur},\ell}]$, with $N_{\mathrm{cur}}$ current latent frames. Memory frames retain native computation or caching; only current residuals are skipped. At block $\ell$, the anchor set $\mathcal A_\ell=\{a_0<\cdots<a_J\}$ contains $K_f=J+1$ frames, and adjacent anchors define intervals $\mathcal I_{a,b}=\{t:a<t<b\}$. The anchor set is selected once per transformer block and shared by its eligible visual attention and FFN residual sites. Suppressing the residual-site index, each eligible update is
\vspace{-7pt}
\begin{equation}
X_i^{\ell+1}=X_i^\ell+\widetilde R_i^\ell,\qquad
\widetilde R_i^\ell=\begin{cases}
R_i^{\ell,\mathrm{sp}},&i\in\mathcal A_\ell,\\
\mathcal P_i(R_a^{\ell,\mathrm{sp}},R_b^{\ell,\mathrm{sp}}),&i\in\mathcal I_{a,b},\\
\mathcal P_i^{\mathrm{ext}}(R_{a_0}^{\ell,\mathrm{sp}},R_{a_1}^{\ell,\mathrm{sp}}),&i<a_0,\\
\mathcal P_i^{\mathrm{ext}}(R_{a_{J-1}}^{\ell,\mathrm{sp}},R_{a_J}^{\ell,\mathrm{sp}}),&i>a_J,
\end{cases}
\label{eq:block}
\end{equation}
Here $\mathcal P_i^{\mathrm{ext}}$ uses the same spectral transport as $\mathcal P_i$ with $\alpha<0$ on the left and $\alpha>1$ on the right. The two anchors nearest each boundary are used: $(a_0,a_1)$ for $i<a_0$ and $(a_{J-1},a_J)$ for $i>a_J$. This construction requires $K_f\geq2$. $R_i^{\ell,\mathrm{sp}}$ denotes the directly computed anchor update at the eligible residual site. Each skipped frame retains its own input $X_i^\ell$; reconstruction supplies only its residual update. Figure~\ref{fig:overview} summarizes the pipeline. Execution routes masks before evaluating anchors and reconstructing skipped updates; the routing surrogate needs no dense anchor outputs.

\vspace{-3pt}
\subsection{Control-Aware Frame Selection}
\vspace{-3pt}
\label{sec:selection}
\begin{wrapfigure}{r}{0.32\textwidth}
\vspace{-38pt}
\centering
\includegraphics[width=\linewidth]{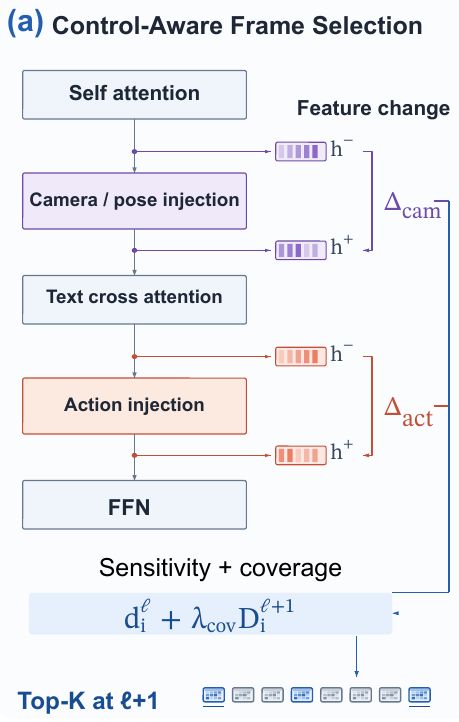}
\caption{Control responses and coverage debt guide selection.}
\label{fig:overview_panel_a}
\vspace{-30pt}
\end{wrapfigure}
\paragraph{Interaction sensitivity.}
The conditioning path supplies $d_i^\ell$ for all current frames (Eq.~\eqref{eq:sensitivity}). We use these scores to select anchors at the next block, so selection does not require the residuals it aims to skip. As illustrated in Figure~\ref{fig:overview_panel_a}, frames with stronger control responses receive higher priority, while coverage debt prevents repeatedly favoring the same subset. Scoring costs are included in the reported end-to-end chunk latency.
\vspace{-8pt}
\paragraph{Coverage debt.}
Following the coverage motivation of frame interleaving~\citep{tang2026fis}, let $h_i^\ell$ count blocks since refresh. Define the refresh age and its normalized coverage debt:\vspace{-4pt}
\begin{equation}
\begin{gathered}
h_i^{\ell+1}=\begin{cases}0,&i\in\mathcal A_\ell,\\h_i^\ell+1,&i\notin\mathcal A_\ell,\end{cases}\\
H_{\mathrm{cov}}=\left\lceil\frac{N_{\mathrm{cur}}}{K_f}\right\rceil,\qquad
D_i^{\ell+1}=\frac{h_i^{\ell+1}}{H_{\mathrm{cov}}}.
\end{gathered}
\label{eq:debt}
\end{equation}
\vspace{-5pt}
\begin{equation}
S_i^{\ell+1}=\begin{cases}
d_i^\ell+\lambda_{\mathrm{cov}}D_i^{\ell+1},&\text{fresh response available},\\
\lambda_{\mathrm{cov}}D_i^{\ell+1},&\text{otherwise}.
\end{cases}
\label{eq:score}
\end{equation}
\enlargethispage{2\baselineskip}
We set $\lambda_{\mathrm{cov}}=0.50$; $H_{\mathrm{cov}}$ scales age but does not guarantee refresh. Deterministic Top-$K_f$ selects $K_f$ anchors across all $N_{\mathrm{cur}}$ current frames: $\mathcal A_{\ell+1}=\TopK_i(S_i^{\ell+1},K_f)$. Neither endpoint is fixed. Without a fresh conditioning response, we omit sensitivity rather than reuse a stale score. Each denoising evaluation starts with a dense block and resets ages (Appendix~\ref{app:selection}).
\vspace{-6pt}
\par\WFclear
\vspace{-3pt}
\subsection{Phase-Aware Reconstruction}
\label{sec:fcpasm}
\vspace{-6pt}
\paragraph{Local phase transport.}
Following phase-based interpolation and extrapolation~\citep{meyer2015phase}, partition residuals into tiles $p$ and suppress the block index. For anchors $a<b$ and target frame $t\notin\{a,b\}$, let $\alpha=(t-a)/(b-a)$ and $F_{i,p,c}(\omega)=\RFFT(R_{i,p,c})(\omega)$ for channel $c$. Interior targets have $0<\alpha<1$; boundary targets have $\alpha<0$ or $\alpha>1$. Figure~\ref{fig:overview_panel_c} illustrates the phase transport and spectral blending used to reconstruct these skipped residuals.

\vspace{-6pt}
Define the pooled cross-spectrum and magnitude product:
\begin{equation}
Z_p(\omega)=\sum_c F_{b,p,c}(\omega)\conj{F_{a,p,c}(\omega)},\quad
W_p(\omega)=\sum_c |F_{b,p,c}(\omega)|\,|F_{a,p,c}(\omega)|.
\label{eq:cross}
\end{equation}
The pooled cross-spectrum favors shared displacement over weak-channel evidence, but it neither estimates optical flow nor resolves phase unwrapping.
\Needspace{0.58\textheight}
\begin{wrapfigure}{r}{0.48\textwidth}
\centering
\includegraphics[width=\linewidth]{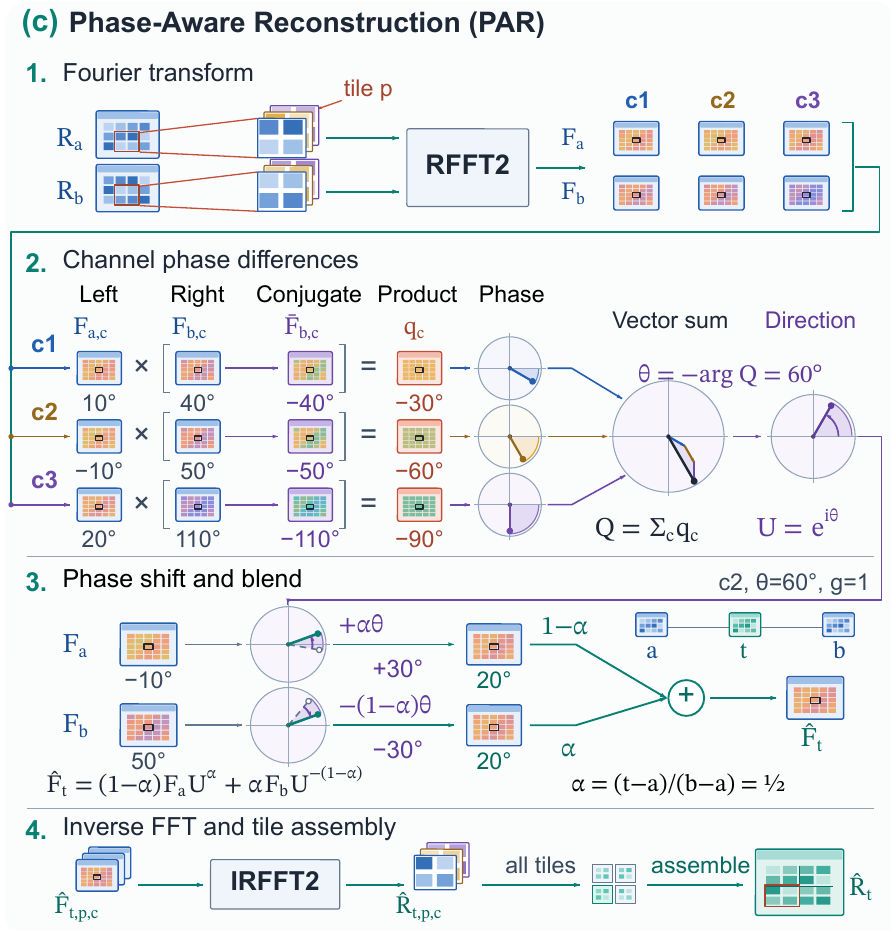}
\captionsetup{skip=4pt}
\vspace{-8pt}
\caption{Phase-aware reconstruction with $Q=\overline{Z}$, where $Z$ is defined in Eq.~\eqref{eq:cross}.}
\label{fig:overview_panel_c}
\vspace{-38pt}
\end{wrapfigure}

\vspace{-3pt}
On reliable bins, set $\theta_p=\Arg Z_p$ and $U_p=e^{\mathrm{i}\theta_p}$; otherwise $U_p=1$. Fractional powers use the fixed branch $U_p^\beta=e^{\mathrm{i}\beta\theta_p}$.

\vspace{-6pt}\paragraph{Frequency-confidence affinity.}
Let $\Omega_+$ be the reliable non-DC spectral bins, with multiplicity weights $\kappa_\omega$ for the real FFT. A tile-level consensus score across the reliable spectral bins is\vspace{-4pt}
\begin{equation}
c_p=\frac{\sum_{\omega\in\Omega_+}\kappa_\omega|Z_p(\omega)|}
{\sum_{\omega\in\Omega_+}\kappa_\omega W_p(\omega)+\epsilon}\in[0,1].
\label{eq:confidence}
\end{equation}
Agreement across channel phases raises $c_p$, whereas conflicting evidence reduces it. With normalized radial frequency $\nu_\omega\in[0,1]$, define
\begin{equation}
\begin{aligned}
\tau(\nu)&=\tau_{\mathrm{low}}+(\tau_{\mathrm{high}}-\tau_{\mathrm{low}})\nu^q,\\
g_{p,\omega}&=\sigma\!\left(\frac{c_p-\tau(\nu_\omega)}{T}\right),
\end{aligned}
\label{eq:gate}
\end{equation}
where $q,T>0$ and $\tau_{\mathrm{high}}\geq\tau_{\mathrm{low}}$. Thus high frequencies require stronger transport evidence. Frequency supplies a prior and confidence estimates reliability, without target access.

\paragraph{Unified reconstruction.}
\fc{} aligns endpoint spectra before interpolation or one-sided extrapolation, applying the following rule to each channel and spatial tile:\vspace{-4pt}
\begin{equation}
\widehat F_{t,p,c}(\omega)=(1-\alpha)F_{a,p,c}(\omega)U_p(\omega)^{g_{p,\omega}\alpha}
+\alpha F_{b,p,c}(\omega)U_p(\omega)^{-g_{p,\omega}(1-\alpha)}.
\label{eq:fcpasm}
\end{equation}
\looseness=-1 An inverse real FFT recovers $\widehat R_{t,p,c}$, and the tiles are assembled into $\widehat R_t=\mathcal P_t(R_a,R_b)$. Setting $g=0$ gives direct value interpolation or extrapolation; setting $g=1$ gives full endpoint phase transport. The anchor endpoints are reproduced exactly. Numerical conventions are given in Appendix~\ref{app:spectral}. 

\par\WFclear
\begin{table}[!t]
\caption{WorldMark results; higher is better. Bold marks the best non-native score per backbone. Speedups are taken from our matched runtime evaluation in Table~\ref{tab:lightinteraction} and are not measured on the WorldMark trajectories used to compute the quality and control scores.}
\label{tab:worldmark}
\centering\scriptsize
\setlength{\tabcolsep}{1.7pt}
\renewcommand{\arraystretch}{1.03}
\resizebox{\linewidth}{!}{%
\begin{tabular}{@{}l*{14}{r}@{}}
\toprule
& \multicolumn{8}{c}{Action Dynamics$\uparrow$}
& \multicolumn{3}{c}{World Memory$\uparrow$}
& \multicolumn{2}{c}{Visual Quality$\uparrow$}
& \multicolumn{1}{c}{Efficiency}\\
\cmidrule(lr){2-9}\cmidrule(lr){10-12}\cmidrule(lr){13-14}\cmidrule(l){15-15}
Method
& \multicolumn{2}{c}{\shortstack{Direction\\Accuracy}}
& \multicolumn{2}{c}{\shortstack{Direction\\Purity}}
& \multicolumn{2}{c}{\shortstack{Motion\\Stability}}
& \multicolumn{2}{c}{\shortstack{Response\\Latency}}
& \multicolumn{1}{c}{Local} & \multicolumn{1}{c}{Global} & \multicolumn{1}{c}{Revisit}
& \multicolumn{1}{c}{Perceptual} & \multicolumn{1}{c}{Aesthetic}
& \multicolumn{1}{c}{Speedup$\uparrow$}\\
\cmidrule(lr){2-3}\cmidrule(lr){4-5}\cmidrule(lr){6-7}\cmidrule(lr){8-9}
& Trans. & Rot. & Trans. & Rot. & Trans. & Rot. & Trans. & Rot.
& & & & & &\\
\midrule
\multicolumn{15}{@{}l}{\textbf{Matrix-Game 3.0}}\\
\addlinespace[1pt]
Native & 98.053 & 87.203 & 83.605 & 82.966 & 79.832 & 98.184 & 93.636 & 96.296 & 76.763 & 47.755 & 80.124 & 77.174 & 58.688 & $1\times$\\
SVG & 97.443 & 82.404 & 84.411 & 77.974 & 80.687 & 97.674 & 92.508 & 96.287 & 76.253 & 45.632 & 78.826 & 75.809 & 58.118 & $0.62\times$\\
BSA & 97.306 & 86.877 & 84.061 & 82.903 & \textbf{82.822} & 98.427 & \textbf{93.748} & 96.268 & 76.502 & 44.809 & 80.591 & \textbf{77.362} & \textbf{59.192} & $0.94\times$\\
MagCache & 97.489 & 84.181 & 83.779 & 79.269 & 72.931 & 92.715 & 92.408 & 96.316 & 67.411 & 43.916 & 77.593 & 71.070 & 53.661 & $1.38\times$\\
TeaCache & 97.972 & 86.158 & 83.572 & 81.514 & 79.612 & 96.616 & 92.949 & 96.137 & 75.433 & 46.561 & 78.638 & 75.654 & 57.635 & $1.44\times$\\
Sol Engine & 97.915 & 86.390 & 82.195 & 81.299 & 73.957 & 95.513 & 91.914 & 96.231 & 65.351 & 45.327 & 77.215 & 72.049 & 55.727 & $1.56\times$\\
Light Interaction & 97.944 & \textbf{87.566} & 84.042 & \textbf{83.392} & 76.010 & 98.695 & 93.728 & 96.304 & 70.812 & 45.720 & 80.999 & 75.302 & 57.653 & $1.61\times$\\
\textbf{\method{}} & \textbf{98.029} & 86.932 & \textbf{85.350} & 81.995 & 79.830 & \textbf{98.973} & 91.523 & \textbf{96.337} & \textbf{76.531} & \textbf{46.645} & \textbf{81.514} & 71.871 & 56.727 & $\mathbf{2.15\times}$\\
\midrule
\multicolumn{15}{@{}l}{\textbf{HY-World 1.5 / WorldPlay}}\\
\addlinespace[1pt]
Native & 90.329 & 87.029 & 81.871 & 84.705 & 56.307 & 85.935 & 83.295 & 83.515 & 91.070 & 55.840 & 87.545 & 74.990 & 58.725 & $1\times$\\
BSA & 91.751 & 82.674 & 78.691 & 80.189 & 52.570 & 88.723 & 85.132 & \textbf{99.118} & 92.510 & 52.945 & 87.884 & \textbf{78.927} & 59.356 & $0.48\times$\\
SVG & 89.773 & 76.378 & 74.093 & 79.965 & 34.723 & 84.741 & 80.642 & 81.549 & 88.634 & 55.142 & 85.737 & 73.290 & 56.371 & $0.92\times$\\
TeaCache & 91.862 & 73.061 & 78.389 & 73.369 & 30.601 & 81.831 & 81.212 & 96.732 & 93.656 & 54.934 & 88.935 & 78.801 & \textbf{59.494} & $1.12\times$\\
MagCache & 89.007 & 73.724 & 77.737 & 74.150 & 32.310 & 81.168 & 80.338 & 98.721 & \textbf{96.053} & 50.327 & 91.135 & 74.887 & 56.768 & $1.38\times$\\
Sol Engine & 91.202 & 80.073 & 76.152 & 78.050 & 29.890 & 85.492 & 73.665 & 98.923 & 95.430 & 54.391 & 90.894 & 77.718 & 58.917 & $1.56\times$\\
Light Interaction & \textbf{92.713} & 85.688 & 80.255 & 83.497 & \textbf{55.552} & 93.450 & 85.060 & 97.003 & 95.136 & 54.979 & \textbf{91.517} & 78.244 & 57.940 & $2.59\times$\\
\textbf{\method{}} & 92.700 & \textbf{86.774} & \textbf{80.686} & \textbf{84.399} & 53.843 & \textbf{94.233} & \textbf{85.930} & 98.838 & 94.566 & \textbf{55.439} & 89.652 & 77.685 & 58.706 & $\mathbf{3.48\times}$\\
\bottomrule
\end{tabular}%
}
\end{table}
\vspace{-6pt}\subsection{Reconstruction-Coupled Sparse Attention}
\label{sec:routing}
\begin{figure}[!htbp]
\centering
\includegraphics[width=\linewidth]{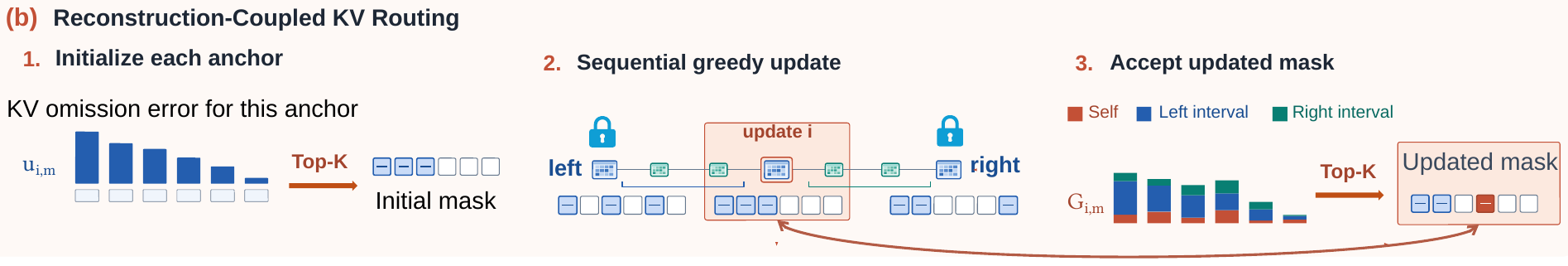}
\caption{Initialize each anchor's mask, then refine it using local and neighboring reconstruction gains while preserving a fixed historical KV budget at every anchor.}
\label{fig:overview_panel_b}
\end{figure}

\paragraph{Local KV-omission importance.}
As in pooled historical routing~\citep{lu2026light}, divide each head's historical KV into $M$ equal-size blocks. Pooled queries and keys/values give
\begin{equation}
s_{i,m}=\frac{\bar q_i^\top\bar k_m}{\sqrt{d_h}},\qquad
\pi_{i,m}=\operatorname{softmax}_m(s_{i,m}),\qquad
\bar o_i=\sum_m\pi_{i,m}\bar v_m.
\label{eq:pooled}
\end{equation}
Pooled one-block omission scores initialize the masks in Figure~\ref{fig:overview_panel_b}:
\begin{equation}
\delta_{i,m}=\frac{\pi_{i,m}}{1-\pi_{i,m}+\epsilon}(\bar o_i-\bar v_m),\qquad
u_{i,m}=\norm{P\delta_{i,m}}_2^2.
\label{eq:omission}
\end{equation}
We use $P=W_O^{(h)}$, the corresponding head slice of the native attention output projection. This inexpensive proxy excludes current/text interactions and later nonlinearities, so it estimates pooled-output deviation rather than the complete nonlinear block error after all residual updates.

\paragraph{Pairwise reconstruction risk.}
Let $x_{i,m}\in\{0,1\}$ indicate whether anchor $i$ retains block $m$. For interval $(a,b)$, define
\begin{equation}
A_{a,b}=\sum_{t\in\mathcal I_{a,b}}(1-\alpha_t)^2,\quad
B_{a,b}=\sum_{t\in\mathcal I_{a,b}}\alpha_t^2,\quad
C_{a,b}=\sum_{t\in\mathcal I_{a,b}}\alpha_t(1-\alpha_t).
\label{eq:abc}
\end{equation}
If phase and confidence are held fixed, the phase factors in Eq.~\eqref{eq:fcpasm} have unit magnitude. The triangle inequality therefore bounds the propagated perturbation by $(1-\alpha_t)\norm{\delta R_a}+\alpha_t\norm{\delta R_b}$. Squaring and summing, then substituting the pooled omission proxies, yields the routing surrogate
\begin{equation}
\begin{aligned}
\mathcal R_{a,b,m}(x_{a,m},x_{b,m})={}&A_{a,b}(1-x_{a,m})u_{a,m}
+B_{a,b}(1-x_{b,m})u_{b,m}\\
&+2C_{a,b}(1-x_{a,m})(1-x_{b,m})\sqrt{u_{a,m}u_{b,m}}.
\end{aligned}
\label{eq:risk}
\end{equation}
The endpoint terms account for each anchor's reconstruction responsibility, while the joint-omission cross term explicitly couples neighboring masks by penalizing simultaneous omission of the same historical block. This frozen-transport surrogate omits cross-block interactions and changes in estimated phase and confidence, so it does not bound the full nonlinear reconstruction error (Appendix~\ref{app:bound}).

\paragraph{Fixed-budget best responses.}
Let $\mathcal E$ contain adjacent-anchor pairs. We minimize
\begin{equation}
\begin{aligned}
\mathcal J(x)={}&\lambda_{\mathrm{anc}}\sum_{i,m}(1-x_{i,m})u_{i,m}
+\lambda_{\mathrm{rec}}\!\left[\sum_{(a,b)\in\mathcal E}\sum_m\mathcal R_{a,b,m}+\mathcal R_{\mathrm{bdry}}(x)\right],\\
&\text{subject to}\quad\sum_m x_{i,m}=K_{\mathrm{KV}}\quad\text{for every anchor }i,
\end{aligned}
\label{eq:objective}
\end{equation}
with nonnegative weights and a fixed $K_{\mathrm{KV}}$. Here $\mathcal R_{\mathrm{bdry}}$ accounts for both anchors used to extrapolate skipped frames outside the anchor span (Appendix~\ref{app:conditional}). Text and permitted current-context interactions remain available; only historical visual blocks enter this budget. We initialize $\mathcal M_i=\TopK_m(u_{i,m},K_{\mathrm{KV}})$. With neighboring masks fixed, define $G_{i,m}$ as the reduction in $\mathcal J$ when anchor $i$ retains block $m$. The conditional objective is linear in the anchor's mask, so its response is
\begin{equation}
\mathcal M_i\leftarrow\TopK_m(G_{i,m},K_{\mathrm{KV}}).
\label{eq:topk}
\end{equation}
Directional refinement passes update each anchor's mask using local gains and reconstruction gains from updated neighboring masks. With strict improvements and ties preserved, repeated passes converge to a coordinate-wise local optimum; the fixed $L_s=2$ passes (forward and backward) need not reach that optimum for every evaluated input.

\section{Experiments}
\label{sec:experiments}
\subsection{Experimental Setup}
\label{sec:setup}\begin{figure}[!t]\centering\includegraphics[width=1.0\linewidth]{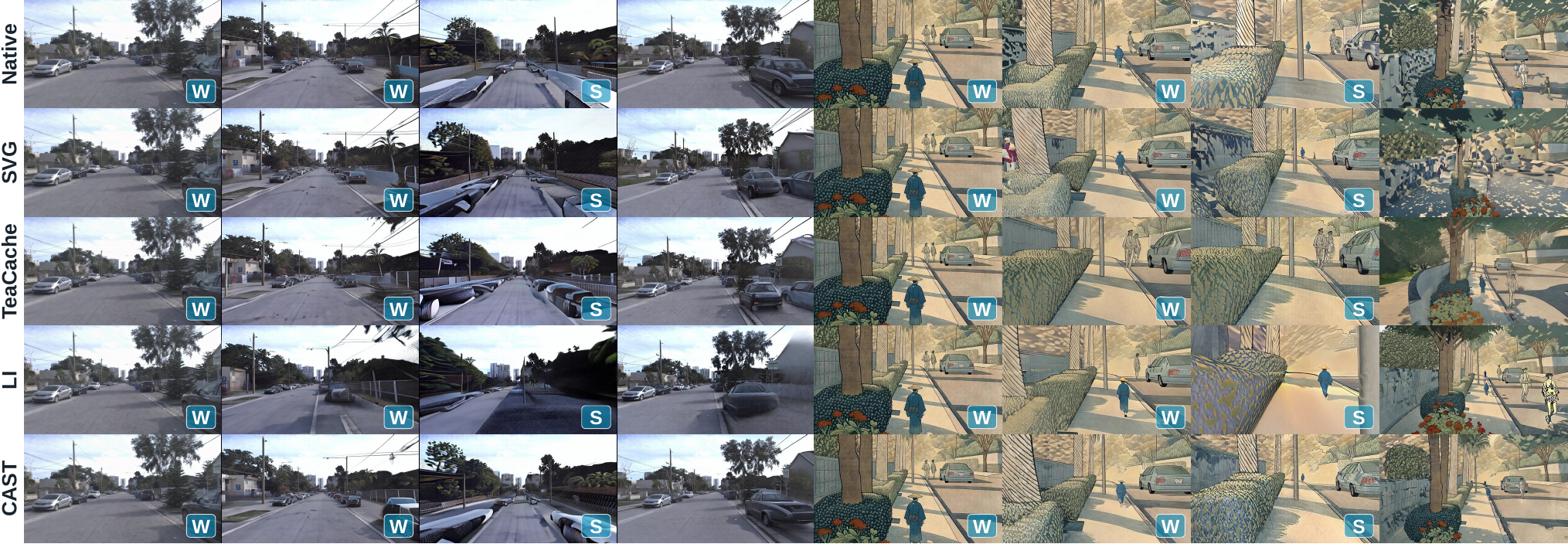}\caption{\textbf{Baseline rollouts on WorldMark.} Matrix-Game 3.0 (left) and HY-World 1.5 (right).}\label{fig:qualitative_baselines}\vspace{-6pt}\end{figure}
\paragraph{Models and protocol.}
We evaluate distilled Matrix-Game 3.0 and HY-World 1.5 (WorldPlay)~\citep{wang2026matrix,sun2025worldplay}, using one and two RTX A6000 GPUs per rollout, respectively; HY uses model/tensor parallel inference. Defaults retain 50\% of current frames and 20\% of historical KV blocks, with two directional refinement passes. Paired comparisons share checkpoints, initial observations, controls, and seeds. Mean steady-state latency uses CUDA synchronization, excludes startup and warm-up, and includes scoring, routing, reconstruction, and memory retrieval; VAE decoding is excluded from all reported steady-state speedup measurements.

\vspace{-2pt}
\paragraph{Benchmarks and baselines.}
WorldMark~\citep{xu2026worldmark} measures thirteen dimensions of control, memory, and visual quality, using 500 samples for every method on both backbones. The Light Interaction-style evaluation~\citep{lu2026light} uses the same 400 trajectories, evaluator, per-backbone hardware, and timing scope for all methods. It reports native-reference and self-comparison PSNR, SSIM~\citep{wang2004ssim}, LPIPS~\citep{zhang2018lpips}, VBench~\citep{huang2023vbench}, and speedup. Native-reference fidelity measures agreement with the base model, not ground-truth correctness. Baselines are Native, SVG~\citep{xi2025svg}, LongCat-Video BSA~\citep{longcat2025video}, TeaCache~\citep{liu2024teacache}, MagCache~\citep{ma2025magcache}, Sol Engine~\citep{li2026sol}, and Light Interaction~\citep{lu2026light}. All baselines are re-run in our unified evaluation environment using their released implementations when available; Tables~\ref{tab:worldmark} and~\ref{tab:lightinteraction} contain no imported published scores. Appendix~\ref{app:config} details the protocols and matched evaluation conditions.

\vspace{-3pt}
\subsection{Main Results: Quality--Efficiency Trade-off}
\label{sec:mainresults}
Tables~\ref{tab:worldmark} and~\ref{tab:lightinteraction} compare behavior and rollout quality, ranking non-native methods.

\Needspace{170pt}
\begin{wraptable}[19]{l}{0.58\textwidth}
\vspace{-2pt}
\caption{VBench and other quality results. Bold: best non-native score per backbone; arrows: preferred direction.}
\label{tab:lightinteraction}
\centering\scriptsize
\setlength{\tabcolsep}{1.2pt}
\renewcommand{\arraystretch}{0.96}
\resizebox{\linewidth}{!}{%
\begin{tabular}{@{}l*{8}{r}@{}}
\toprule
\multirow{2}{*}{Method} & \multicolumn{3}{c}{vs.\ Original} & \multicolumn{3}{c}{Self-Comparison}
& \multirow{2}{*}{VBench$\uparrow$} & \multirow{2}{*}{Speedup$\uparrow$}\\
\cmidrule(lr){2-4}\cmidrule(lr){5-7}
& PSNR$\uparrow$ & SSIM$\uparrow$ & LPIPS$\downarrow$
& PSNR$\uparrow$ & SSIM$\uparrow$ & LPIPS$\downarrow$ & &\\
\midrule
\multicolumn{9}{@{}l}{\textbf{Matrix-Game 3.0}}\\
\addlinespace[1pt]
Native & --- & --- & --- & 17.07 & 0.5264 & 0.3283 & 0.7504 & $1\times$\\
SVG & 12.38 & 0.4170 & 0.5587 & 14.48 & 0.4949 & 0.4406 & 0.7511 & $0.62\times$\\
BSA & 13.34 & 0.4228 & 0.5795 & 16.66 & 0.5326 & 0.4094 & 0.7336 & $0.94\times$\\
MagCache & 15.02 & 0.4902 & 0.4504 & 14.06 & 0.5398 & 0.3837 & 0.7192 & $1.38\times$\\
TeaCache & \textbf{19.03} & \textbf{0.5619} & \textbf{0.3818} & 18.84 & 0.5765 & 0.3602 & 0.7146 & $1.44\times$\\
Sol Engine & 13.81 & 0.4628 & 0.5262 & 16.01 & \textbf{0.6018} & 0.3751 & 0.7151 & $1.56\times$\\
Light Int. & 14.67 & 0.4638 & 0.4442 & 15.59 & 0.4881 & 0.3658 & 0.7415 & $1.61\times$\\
\textbf{\method{}} & 15.73 & 0.4875 & 0.4657 & \textbf{19.35} & 0.5478 & \textbf{0.3581} & \textbf{0.7570} & $\mathbf{2.15\times}$\\
\midrule
\multicolumn{9}{@{}l}{\textbf{HY-World 1.5 / WorldPlay}}\\
\addlinespace[1pt]
Native & --- & --- & --- & 18.60 & 0.5678 & 0.5131 & 0.7201 & $1\times$\\
BSA & 15.94 & 0.4639 & 0.3755 & 15.44 & 0.4205 & 0.3720 & 0.7943 & $0.48\times$\\
SVG & 19.48 & 0.6028 & 0.2209 & 17.75 & 0.5299 & 0.2187 & 0.8082 & $0.92\times$\\
TeaCache & 20.90 & 0.6588 & 0.1892 & 18.86 & 0.5743 & 0.2054 & 0.8150 & $1.12\times$\\
MagCache & 18.76 & 0.6684 & 0.2390 & 18.28 & 0.5597 & 0.2247 & 0.8137 & $1.38\times$\\
Sol Engine & 23.75 & 0.6540 & 0.2574 & 16.33 & 0.3829 & 0.2948 & 0.7293 & $1.56\times$\\
Light Int. & \textbf{24.81} & 0.6500 & \textbf{0.1788} & 18.83 & 0.4570 & 0.5070 & 0.7846 & $2.59\times$\\
\textbf{\method{}} & 22.74 & \textbf{0.6745} & 0.2213 & \textbf{18.89} & \textbf{0.5798} & \textbf{0.2052} & \textbf{0.8178} & $\mathbf{3.48\times}$\\
\bottomrule
\end{tabular}%
}
\end{wraptable}
\textbf{Quality--efficiency comparison.}
\method{} achieves the highest speedup/VBench on both backbones: $2.15\times$/0.7570 on Matrix and $3.48\times$/0.8178 on HY, versus Light Interaction's $1.61\times$/0.7415 and $2.59\times$/0.7846. On HY, TeaCache reaches similar VBench (0.8150) at only $1.12\times$ speedup.

\paragraph{Interactive behavior and world memory.}
\method{} leads seven of thirteen WorldMark dimensions on Matrix and six on HY (Table~\ref{tab:worldmark}). Matrix local/global/revisit scores reach 76.531/46.645/81.514 versus Light Interaction's 70.812/45.720/80.999. On HY, rotational stability/translational response latency improve to 94.233/85.930 from 93.450/85.060; translation stability (53.843 versus 55.552) and revisit memory (89.652 versus 91.517) remain lower.
\vspace{-3pt}
\par\WFclear\paragraph{Rollout fidelity and self-comparison.}
On Matrix, \method{} leads non-native self-comparison PSNR (19.35 versus TeaCache's 18.84) and LPIPS (0.3581 versus 0.3602). It leads all three metrics on HY, narrowly exceeding TeaCache: 18.89 versus 18.86 PSNR, 0.5798 versus 0.5743 SSIM, and 0.2052 versus 0.2054 LPIPS. Figure~\ref{fig:qualitative_baselines} shows matched rollouts.

\subsection{Ablation Studies}
\label{sec:mechanisms}
\label{sec:composition}
\begin{wraptable}{r}{0.58\textwidth}
\vspace{\dimexpr-\intextsep-2\baselineskip\relax}
\vspace{-2pt}
\captionsetup{font=small}
\caption{Progressive configurations and fixed-budget ablations on Matrix-Game 3.0. A: uniform/interleaved anchors + linear reconstruction; B: CAFS + linear reconstruction; C: CAFS + PAR. A--C use independent KV masks. Fixed-budget variants A--D use frame retention $r_f=0.50$ and historical KV retention $r_{\mathrm{KV}}=0.20$. T-Motion, Local, Global, and Revisit are WorldMark scores. Speedup is relative to Native; latency is measured in ms/chunk and excludes VAE decoding for every configuration.}
\label{tab:ablation}
\centering\fontsize{7.5}{8.5}\selectfont
\setlength{\tabcolsep}{1pt}
\renewcommand{\arraystretch}{1.12}
\resizebox{\linewidth}{!}{%
\begin{tabular}{@{}lrrrrrrrr@{}}
\toprule
Variant & VBench$\uparrow$ & LPIPS$\downarrow$ & \shortstack{T-Motion$\uparrow$} & Local$\uparrow$ & Global$\uparrow$ & Revisit$\uparrow$ & Latency$\downarrow$ & Speedup$\uparrow$\\
\midrule
A: Uniform & 0.7218 & 0.5128 & 74.218 & 70.864 & 38.927 & 72.991 & 1,934 & $2.23\times$\\
B: + CAFS & 0.7346 & 0.4974 & 76.482 & 72.943 & 41.586 & 75.648 & 1,951 & $2.21\times$\\
C: + PAR & 0.7479 & 0.4776 & 78.219 & 75.046 & 44.387 & 79.206 & 1,976 & $2.19\times$\\
\midrule
Native & 0.7504 & --- & 79.832 & 76.763 & 47.755 & 80.124 & 4,319 & $1.00\times$\\
+ CAFS & 0.7370 & 0.4960 & 76.800 & 74.000 & 43.200 & 76.700 & 2,800 & $1.54\times$\\
\shortstack[l]{+ CAFS+ PAR} & 0.7600 & 0.4550 & 80.200 & 76.900 & 47.500 & 82.000 & 2,825 & $1.53\times$\\
\shortstack[l]{D: Full \method{}} & 0.7570 & 0.4657 & 79.830 & 76.531 & 46.645 & 81.514 & 2,007 & $2.15\times$\\
\bottomrule
\end{tabular}%
}
\vspace{-\intextsep}
\end{wraptable}
Table~\ref{tab:ablation} compares progressive/fixed-budget designs on Matrix-Game 3.0. Native retains all frames and historical KV; intermediate configurations retain 50\% of frames with dense KV, and full \method{} retains 20\% of historical KV. Fixed-budget comparisons match checkpoints, prompts, controls, noise, candidate sets, and budgets.

Full \method{} cuts latency from 4,319 to 2,007 ms/chunk ($2.15\times$), raising VBench from 0.7504 to 0.7570 and revisit memory from 80.124 to 81.514. T-Motion/local memory remain near Native; global memory falls from 47.755 to 46.645. Native + CAFS uses linear residual interpolation and two-anchor linear extrapolation outside the anchor span. PAR replaces recovery at matched frame retention; RCSA adds sparse historical access with coupled masks. RCSA changes KV retention; C to D isolates coupling at fixed sparsity.
\vspace{-6pt}
\paragraph{Contribution at fixed budgets.}
At fixed frame/KV retention, A uses uniform/interleaved anchors, linear reconstruction, and independent KV masks; B adds CAFS, C replaces reconstruction with PAR, and D adds coupling (full \method{}). A to B improves T-Motion/global memory by 2.264/2.659 points. B to C lowers LPIPS by 0.0198 and adds 3.558 revisit points. C to D adds 2.258/2.308 global/revisit points and lowers LPIPS by 0.0119. Respective costs are 17/25/31 ms/chunk. A to D lowers LPIPS by 0.0471 and adds 7.718 global-memory points for 73 ms/chunk. Coupling trades runtime for quality; gains depend on addition order.
\subsection{Efficiency Analysis}
\label{sec:efficiency}
\paragraph{Runtime and overhead.}

\begin{wraptable}{r}{0.48\textwidth}
\vspace{-35pt}
\caption{Mean steady-state runtime (ms/chunk); OH includes routing and reconstruction.}
 \label{tab:runtime}
 \centering\fontsize{7.5}{8.5}\selectfont
 \setlength{\tabcolsep}{2pt}
 \renewcommand{\arraystretch}{0.86}
 \begin{tabular*}{\linewidth}{@{\extracolsep{\fill}}lrrrrrc@{}}
 \toprule
 Variant & Core & Route & Recon. & Total & OH & Speedup\\
 \midrule
 \multicolumn{7}{@{}l}{\textbf{Matrix-Game 3.0}}\\
Native & 4,319 & 0 & 0 & 4,319 & 0.0\% & $1.00\times$\\
 Indep. (C) & 1,887 & 43 & 46 & 1,976 & 4.5\% & $2.19\times$\\
 \method{} & 1,891 & 74 & 42 & \textbf{2,007} & 5.8\% & $\mathbf{2.15\times}$\\
 \midrule
 \multicolumn{7}{@{}l}{\textbf{HY-World 1.5}}\\
Native & 8,713 & 0 & 0 & 8,713 & 0.0\% & $1.00\times$\\
 Indep. (C) & 2,303 & 55 & 74 & 2,432 & 5.3\% & $3.58\times$\\
 \method{} & 2,297 & 127 & 80 & \textbf{2,504} & 8.3\% & $\mathbf{3.48\times}$\\
 \bottomrule
 \end{tabular*}
 \vspace{-10pt}
\end{wraptable}
Table~\ref{tab:runtime} compares Native, independent routing (C), and \method{} with matched precision, resolution, chunk shape, sampling, and hardware per backbone. Steady-state latency includes selection, routing, reconstruction, and retrieval, excluding warm-up and VAE decoding. C combines CAFS/PAR with independent historical KV routing.

\textbf{Overhead.}
Core time falls from 4,319 ms to 1,887/1,891 ms for C/\method{} on Matrix and from 8,713 ms to 2,303/2,297 ms on HY. \method{}'s routing/reconstruction costs 74/42 ms on Matrix and 127/80 ms on HY (5.8\%/8.3\% of latency). Coupling adds 31/72 ms/chunk over C, reducing speedup from $2.19\times$/$3.58\times$ to $2.15\times$/$3.48\times$. On Matrix, this accompanies Table~\ref{tab:ablation}'s memory/LPIPS gains. HY's larger native-to-accelerated core-time ratio yields greater speedup despite its higher overhead share.
\vspace{0pt}
\paragraph{Default operating point.}
Appendix~\ref{app:budget_sweep} identifies $(r_f,r_{\mathrm{KV}})=(0.50,0.20)$ as a quality--latency knee. At 20\% KV retention, cutting frame retention from 50\% to 30\% raises speedup from $2.15\times$ to $2.52\times$, but lowers VBench from 0.7570 to 0.7441 and global memory from 46.645 to 42.738. Raising frame retention to 60\% adds only 0.0031 VBench/0.574 global-memory points for 356 ms/chunk. At 50\% frame retention, 30\% KV retention adds 0.741/0.749 global/revisit points for 71 ms/chunk. Four versus two passes add only 0.136/0.158 global/revisit points for 31 ms/chunk; one forward--backward pair ($L_s=2$) captures most coupling gains at the selected operating point.
\par\WFclear\vspace{-7pt}
\section{Conclusion}
\label{sec:conclusion}\vspace{-10pt}
We introduced \method{}, which connects anchor selection, residual reconstruction, and historical KV routing through their downstream effects. It balances control sensitivity with coverage, recovers skipped residuals using \fc{}, and coordinates historical supports to protect that recovery. \method{} achieves $2.15\times$ and $3.48\times$ speedups on Matrix-Game 3.0 and HY-World 1.5, with VBench scores of 0.7570 and 0.8178. It also leads non-native methods on seven and six of thirteen WorldMark dimensions, respectively, demonstrating benefits for interactive behavior and visual quality. These findings highlight reconstruction dependencies as a useful guide for coordinating frame computation and historical attention under limited inference budgets.

\FloatBarrier
\bibliography{iclr2027_conference}
\bibliographystyle{iclr2027_conference}
\newpage
\appendix
\raggedbottom
\setcounter{topnumber}{3}
\setcounter{totalnumber}{4}

\noindent\textbf{Appendix overview.} The appendix provides frame-scheduling and spectral-reconstruction details, the reconstruction-risk analysis, and the configuration and evaluation protocols supporting the main-text results and their interpretation under the stated modeling assumptions.

\section{Frame Scheduling and Spectral Reconstruction Details}
\label{app:details}

\subsection{Control-Score Normalization and Scheduling}
\label{app:selection}
For module $k\in\{\mathrm{cam},\mathrm{act}\}$, define $r_{i,k}^\ell=\RMS(\Delta_{i,k}^\ell)$ over the frame's spatial and channel coordinates. We use independent min--max normalization across the current chunk:
\begin{equation}
\operatorname{Norm}(\Delta_{i,k}^\ell)=
\frac{r_{i,k}^\ell-\min_j r_{j,k}^\ell}
{\max_j r_{j,k}^\ell-\min_j r_{j,k}^\ell+\epsilon}.
\label{eq:normalization}
\end{equation}
A constant module response contributes zero. If only one control branch exists, average the available branch rather than treating the absent branch as evidence of low sensitivity. This convention bounds $d_i^\ell$ in $[0,1]$ and prevents arbitrary feature scales from dominating the debt term. It is not robust to outliers, so this normalization can be sensitive to extreme responses.

All $N_{\mathrm{cur}}$ current frames participate in deterministic Top-$K_f$ selection under Eq.~\eqref{eq:score}; ties use increasing frame index. Neither the first nor the last frame is mandatory. Matrix selects five of ten current latent frames, and HY selects two of four. When $K_f=N_{\mathrm{cur}}$, all current frames are computed and reconstruction is bypassed. Positive coverage debt raises the priority of repeatedly skipped frames; $H_{\mathrm{cov}}$ remains a nominal age scale rather than a strict refresh guarantee. Skipped frames between two selected anchors use PAR, while frames outside the selected-anchor span use one-sided PAR extrapolation from the two nearest anchors on that side.

The first block of each denoising evaluation is dense and resets ages. An anchor set is then selected once per block and shared across eligible visual attention and FFN residual sites. Fresh all-frame conditioning responses from block $\ell$ supply $d_i^\ell$ for block $\ell+1$; where no fresh response is available, selection uses coverage debt alone. Conditioning and text updates remain dense, so the score does not require computing the visual residual being skipped.

\subsection{Phase, Confidence, and Real-Valued Reconstruction}
\label{app:spectral}
\looseness=-1 Use spatial FFTs independently in each channel, with orthonormal normalization. This paper uses disjoint tiles; smaller boundary tiles are transformed at their own size, avoiding hidden padding amplification in the norm argument. Overlapping-window variants are outside the evaluation scope.

\looseness=-1 The ratio $|Z_p(\omega)|/(W_p(\omega)+\epsilon)$ is an amplitude-weighted circular resultant length across channels. It approaches one when nonzero channel cross-spectra share a phase, and approaches zero when those phases cancel. Equation~\eqref{eq:confidence} aggregates this evidence over reliable non-DC bins. Reliability requires $W_p$ and $|Z_p|$ to exceed documented numerical thresholds; the thresholds are $W_p(\omega)>\eta_p$ and $|Z_p(\omega)|>\eta_p$, where $\eta_p=10^{-6}\max_{\omega}W_p(\omega)+10^{-8}$, with FP32 accumulation and $\epsilon=10^{-8}$. If no reliable bin exists, set $c_p=0$ and $U_p=1$. A confidence score is not a calibrated probability of correct motion or a guarantee that phase transport will recover the target.

For a real transform, mirrored frequency bins must receive conjugate phase factors. Define phases on one representative of each conjugate pair and assign the negative phase to its mirror. Self-conjugate bins, including DC and applicable Nyquist points, receive $U_p=1$ so that their coefficients remain real. The real-FFT weights $\kappa_\omega$ account for the omitted conjugate half when summing energies. With these conventions the full-spectrum multiplier has unit modulus and preserves real-valuedness. Radial frequency is normalized by the largest represented spatial frequency radius.

For a translation with an unambiguous phase branch, substitution into Eq.~\eqref{eq:fcpasm} at $g=1$ gives
\begin{equation}
\widehat F_t=(1-\alpha)F_a e^{-\mathrm{i}\alpha\omega^\top d}
+\alpha F_a e^{-\mathrm{i}\omega^\top d}e^{\mathrm{i}(1-\alpha)\omega^\top d}
=F_a e^{-\mathrm{i}\alpha\omega^\top d}.
\label{eq:exacttranslation}
\end{equation}
This identity also holds for real $\alpha$ outside $[0,1]$ under the same translation and phase-branch assumptions. Boundary reconstruction uses Eq.~\eqref{eq:fcpasm} without clipping $\alpha$ or adding anchors. Extrapolation is not a convex blend: even with fixed transport, its worst-case perturbation amplification is $|1-\alpha|+|\alpha|$. Long extrapolation distances, changing motion, and occlusion can therefore degrade recovery. The routing surrogate accounts for propagated anchor errors, not the intrinsic error of extending a local motion model beyond its supporting anchors.
\looseness=-1 This identity applies to coherent, identifiable transport modes, not arbitrary deformation or ambiguous Nyquist content. Principal endpoint phase only determines displacement modulo $2\pi$ at each frequency. Fractional powers of wrapped phase are generally not equal to fractional powers of the true unwrapped displacement. High consensus can coexist with wrapping ambiguity; the method does not claim to remove this limitation. Phase wrapping remains a limitation of principal-branch transport. The confidence- and frequency-dependent gate reduces aggressive corrections in unreliable regions but does not resolve phase ambiguity.

\section{Reconstruction Risk and Historical Routing}
\label{app:bound}
\subsection{Exact Single-Block Omission in the Pooled Model}
Removing historical block $m$ and renormalizing the remaining pooled attention gives, for $\pi_{i,m}<1$,
\begin{equation}
\bar o_i^{(-m)}=\frac{\bar o_i-\pi_{i,m}\bar v_m}{1-\pi_{i,m}},\qquad
\bar o_i^{(-m)}-\bar o_i=\frac{\pi_{i,m}}{1-\pi_{i,m}}(\bar o_i-\bar v_m).
\label{eq:omissionexact}
\end{equation}
The positive stabilizer in Eq.~\eqref{eq:omission} modifies this identity near unit mass. It is exact only before stabilization and only for the stated pooled distribution. We use $P=W_O^{(h)}$, the corresponding head slice of the native attention output projection, with no additional learned or sketched projection. A pooled projected vector does not upper-bound a spatial residual tensor without further assumptions. The proxy concerns the attention residual site; subsequent FFN nonlinearities are not covered by this local estimate. The FFN residual is computed and reconstructed separately using the same block-level anchors as the corresponding visual attention residuals.

\subsection{A Conditional Propagation Bound}
\label{app:conditional}
Freeze $U_p$ and $g_{p,\omega}$ while perturbing anchor residuals. Let $T_{a,t}$ and $T_{b,t}$ denote the resulting phase multipliers and transforms, including disjoint-tile assembly. Parseval's identity gives $\norm{T_{a,t}v}_2=\norm{v}_2$ and similarly for $T_{b,t}$ under Appendix~\ref{app:spectral}'s conventions.

\begin{proposition}[Frozen-transport interval bound]
For endpoint perturbations $v_a,v_b$ and fixed transport operators, the reconstruction change satisfies
\begin{equation}
\sum_{t\in\mathcal I_{a,b}}\norm{(1-\alpha_t)T_{a,t}v_a+\alpha_tT_{b,t}v_b}_2^2
\leq A_{a,b}\norm{v_a}_2^2+B_{a,b}\norm{v_b}_2^2
+2C_{a,b}\norm{v_a}_2\norm{v_b}_2.
\label{eq:bound}
\end{equation}
\end{proposition}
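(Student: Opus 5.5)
The plan is a termwise argument: bound each summand with the triangle inequality and the isometry of the frozen transport operators, then sum over the interval.

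Fix $t\in\mathcal I_{a,b}$. For interior targets $0<\alpha_t<1$, so both weights $1-\alpha_t$ and $\alpha_t$ are nonnegative. By the triangle inequality and absolute homogeneity of $\norm{\cdot}_2$,
\begin{equation*}
\norm{(1-\alpha_t)T_{a,t}v_a+\alpha_tT_{b,t}v_b}_2\le(1-\alpha_t)\norm{T_{a,t}v_a}_2+\alpha_t\norm{T_{b,t}v_b}_2 .
\end{equation*}

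Next I would invoke the Parseval statement preceding the proposition, $\norm{T_{a,t}v}_2=\norm{v}_2$ and likewise for $T_{b,t}$. This replaces the right-hand side by $(1-\alpha_t)\norm{v_a}_2+\alpha_t\norm{v_b}_2$.

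Both sides of the resulting inequality are nonnegative, so squaring preserves it:
\begin{equation*}
\norm{(1-\alpha_t)T_{a,t}v_a+\alpha_tT_{b,t}v_b}_2^2\le(1-\alpha_t)^2\norm{v_a}_2^2+\alpha_t^2\norm{v_b}_2^2+2\alpha_t(1-\alpha_t)\norm{v_a}_2\norm{v_b}_2 .
\end{equation*}
Summing over $t\in\mathcal I_{a,b}$ and recognizing the coefficients from Eq.~\eqref{eq:abc} gives exactly $A_{a,b}\norm{v_a}_2^2+B_{a,b}\norm{v_b}_2^2+2C_{a,b}\norm{v_a}_2\norm{v_b}_2$, which is Eq.~\eqref{eq:bound}.

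The only step needing real justification is the isometry of $T_{a,t}$ and $T_{b,t}$, because it depends on the conventions of Appendix~\ref{app:spectral}. The points I would check are:
\begin{itemize}
\item The FFTs use orthonormal normalization.
\item Each frozen multiplier $U_p(\omega)^{\pm g\beta}=e^{\pm\mathrm{i}g\beta\theta_p}$ has unit modulus for real exponents, on both reliable bins and the bins where $U_p=1$.
\item Conjugate-mirror assignment keeps the output real, and self-conjugate bins receive $U_p=1$, so the inverse real FFT is the restriction of a unitary map to real signals.
\item Tiles are disjoint, including smaller boundary tiles transformed at their own size, so the squared norms add across tiles and channels.
\end{itemize}
Together these make the full per-tile, per-channel operator unitary, and its direct sum over tiles an isometry.

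I would also note the limits of the argument. For boundary extrapolation ($\alpha<0$ or $\alpha>1$) the coefficients are no longer nonnegative, and the same argument gives the factor $|1-\alpha|+|\alpha|$ instead. The proposition as stated concerns only interior intervals, so no extra case is needed.
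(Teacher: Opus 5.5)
Your proposal is correct and is essentially the paper's argument. The appendix proof expands the squared norm and bounds the cross term $2\alpha_t(1-\alpha_t)\mathrm{Re}\langle T_{a,t}v_a,T_{b,t}v_b\rangle$ by Cauchy--Schwarz; you apply the triangle inequality before squaring, which gives the identical termwise bound and matches the main-text derivation in Section~\ref{sec:routing}.
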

\begin{proof}
Expand the squared norm at each $t$. The two individual terms follow from norm preservation. The cross term is at most $2\alpha_t(1-\alpha_t)\norm{v_a}_2\norm{v_b}_2$ by Cauchy--Schwarz. Summing over the interval gives the three interval-dependent coefficients used in Eq.~\eqref{eq:abc}.
\end{proof}

For a single omitted block, insert $v_a=(1-x_{a,m})\delta R_{a,m}$ and $v_b=(1-x_{b,m})\delta R_{b,m}$. Binary masks satisfy $(1-x)^2=1-x$. Replacing their true squared residual norms by $u_{a,m}$ and $u_{b,m}$ produces Eq.~\eqref{eq:risk}. For integer frame indices with gap $n=b-a\geq1$, the coefficients have closed forms
\begin{equation}
A_{a,b}=B_{a,b}=\frac{(n-1)(2n-1)}{6n},\qquad
C_{a,b}=\frac{n^2-1}{6n}.
\label{eq:abcclosed}
\end{equation}
They vanish for adjacent frames ($n=1$), which have no skipped-frame responsibility.

For zero-based current-frame indices, define the left and right boundary sets
$\mathcal I_L=\{t:0\leq t<a_0\}$ and $\mathcal I_R=\{t:a_J<t<N_{\mathrm{cur}}\}$. Boundary extrapolation uses $(a_0,a_1)$ on the left and $(a_{J-1},a_J)$ on the right. For $q\in\{L,R\}$, let $(a_q,b_q)$ denote the corresponding pair and define
\begin{equation}
\begin{aligned}
A_q&=\sum_{t\in\mathcal I_q}(1-\alpha_t)^2,\qquad
B_q=\sum_{t\in\mathcal I_q}\alpha_t^2,\\
C_q&=\sum_{t\in\mathcal I_q}|\alpha_t(1-\alpha_t)|,\qquad
\alpha_t=\frac{t-a_q}{b_q-a_q}.
\end{aligned}
\label{eq:boundarycoeff}
\end{equation}
For frozen transport, the per-frame perturbation is bounded by $|1-\alpha_t|\norm{v_{a_q}}_2+|\alpha_t|\norm{v_{b_q}}_2$. Squaring and summing gives the boundary analogue of Eq.~\eqref{eq:bound}; its cross coefficient is $C_q\geq0$, even outside $[0,1]$. Substituting the pooled omission proxies gives
\begin{equation}
\begin{aligned}
\mathcal R_{\mathrm{bdry}}(x)=\sum_{q\in\{L,R\}}\sum_m\big[{}&A_q(1-x_{a_q,m})u_{a_q,m}+B_q(1-x_{b_q,m})u_{b_q,m}\\
&+2C_q(1-x_{a_q,m})(1-x_{b_q,m})\sqrt{u_{a_q,m}u_{b_q,m}}\big].
\end{aligned}
\label{eq:boundaryrisk}
\end{equation}
Empty boundary sets contribute zero. When $K_f=2$, both boundary sets use the same anchor pair, but contain disjoint target frames; both contributions are included. Interior intervals contribute pairwise responsibility through Eq.~\eqref{eq:risk}; boundary pairs contribute the same responsibility under one-sided extrapolation. Together they give Eq.~\eqref{eq:objective}.

Three qualifications matter. First, recomputing $U$ and $g$ after an anchor perturbation introduces an additional operator-change term; the proposition does not bound it. Second, pooling and projection can distort omission norms. Third, summing per-block risks discards interactions among different omitted blocks and their joint softmax renormalization. Equation~\eqref{eq:objective} is therefore a tractable surrogate rather than a global certificate for sparse inference. Moreover, the conservative norm bound removes the phase orientation itself: its coefficients depend on temporal interpolation responsibility, not on the realized phase or confidence values. The interior bound also applies to direct linear interpolation; the boundary version uses absolute cross coefficients for extrapolation. The claim supported by this derivation is coupling to the reconstruction dependency and norm-preserving operator class; phase-specific routing benefits require empirical evidence or a sharper estimator.

\subsection{Retain Gains and Coordinate-Wise Refinement}
\label{app:solver}
For anchor $i$ with left/right neighbors $l$/$r$ when present, fix their masks and define the retain gain
\begin{equation}
\begin{aligned}
G_{i,m}={}&\lambda_{\mathrm{anc}}u_{i,m}\\
&+\lambda_{\mathrm{rec}}\mathbf1_{l\text{ exists}}
\left[B_{l,i}u_{i,m}+2C_{l,i}(1-x_{l,m})\sqrt{u_{l,m}u_{i,m}}\right]\\
&+\lambda_{\mathrm{rec}}\mathbf1_{r\text{ exists}}
\left[A_{i,r}u_{i,m}+2C_{i,r}(1-x_{r,m})\sqrt{u_{i,m}u_{r,m}}\right]\\
&+\lambda_{\mathrm{rec}}G^{\mathrm{bdry}}_{i,m}.
\end{aligned}
\label{eq:gain}
\end{equation}
where the boundary contribution is
\begin{equation}
\begin{aligned}
G^{\mathrm{bdry}}_{i,m}=\sum_{q\in\{L,R\}}\big\{&\mathbf1_{i=a_q}\left[A_qu_{i,m}+2C_q(1-x_{b_q,m})\sqrt{u_{i,m}u_{b_q,m}}\right]\\
+&\mathbf1_{i=b_q}\left[B_qu_{i,m}+2C_q(1-x_{a_q,m})\sqrt{u_{i,m}u_{a_q,m}}\right]\big\}.
\end{aligned}
\label{eq:boundarygain}
\end{equation}
Holding all other masks fixed, $\mathcal J=\mathrm{constant}-\sum_mx_{i,m}G_{i,m}$. Thus the largest $K_{\mathrm{KV}}$ gains solve the single-anchor problem exactly. A neighboring retention removes the joint-omission contribution, showing complementary coupling; large individual gains can still favor shared retention.

For efficiency, an implementation may restrict each anchor to a fixed candidate set $\mathcal C_i$ of size $M_c\geq K_{\mathrm{KV}}$, obtained from its local scores and neighboring proposals. Candidate sets are fixed before refinement; discarded coordinates remain zero and still count as omissions in neighboring gains. The same candidate sets must be used by the independent baseline. The convergence statement is then relative to these sets. Full-history pooled scoring and candidate construction are still counted in runtime; a small $M_c$ does not make candidate discovery free. The unrestricted mathematical objective is recovered with $\mathcal C_i=\{1,\ldots,M\}$, with no candidate truncation.

\begin{algorithm}[htb]
\caption{Fixed-budget reconstruction-coupled routing}
\label{alg:routing}
\small
\begin{tabularx}{\linewidth}{@{}rY@{}}
1 & \textbf{Input:} ordered anchors, omission scores $u$, fixed candidates $\mathcal C_i$, interval coefficients, budget $K_{\mathrm{KV}}$, boundary coefficients from Eq.~\eqref{eq:boundarycoeff}, and directional-pass cap $L_s$.\\
2 & Initialize $\mathcal M_i\gets\TopK_{m\in\mathcal C_i}(u_{i,m},K_{\mathrm{KV}})$ for every anchor; construct binary masks $x$.\\
3 & \textbf{For} directional refinement pass $s=1,\ldots,L_s$:\\
4 & \quad Set $\mathrm{changed}\gets\mathrm{false}$.\\
5 & \quad \textbf{For} $i$ in forward order if $s$ is odd, backward order otherwise:\\
6 & \qquad Compute $G_{i,m}$ using the latest neighbor masks (Eq.~\eqref{eq:gain}).\\
7 & \qquad Let $\mathcal M_i'\gets\TopK_{m\in\mathcal C_i}(G_{i,m},K_{\mathrm{KV}})$.\\
8 & \qquad \textbf{If} $\sum_{m\in\mathcal M_i'}G_{i,m}>\sum_{m\in\mathcal M_i}G_{i,m}$:\\
9 & \qquad\quad Replace $\mathcal M_i$, update $x$, and set $\mathrm{changed}\gets\mathrm{true}$.\\
10 & \quad \textbf{If} no mask changed, stop.\\
11 & \textbf{Return:} historical masks with exactly $K_{\mathrm{KV}}$ retained blocks per anchor.\\
\end{tabularx}
\end{algorithm}

\begin{proposition}[Termination and conditional optimality]
With fixed scores, coefficients, and candidate sets, exact Top-$K$ best responses that accept only strict decreases in $\mathcal J$ terminate after finitely many changes. A full unchanged directional refinement pass certifies coordinate-wise optimality over the candidates while holding all neighboring masks fixed.
\end{proposition}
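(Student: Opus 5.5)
The plan is the standard potential-function argument for coordinate descent on a finite state space. The objective $\mathcal J$ in Eq.~\eqref{eq:objective} serves as the potential.

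\textbf{Step 1 (finite state space).} With candidate sets $\mathcal C_i$ fixed, each anchor's feasible masks are the $K_{\mathrm{KV}}$-subsets of $\mathcal C_i$. Coordinates outside $\mathcal C_i$ are frozen at zero. Hence the joint feasible set $\mathcal X=\prod_i\binom{\mathcal C_i}{K_{\mathrm{KV}}}$ is finite. The scores $u$, the coefficients $A,B,C$ and the boundary coefficients of Eq.~\eqref{eq:boundarycoeff} are all fixed, so $\mathcal J$ is a fixed real function on $\mathcal X$.

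\textbf{Step 2 (accepted changes strictly decrease $\mathcal J$).} I will use the identity from Appendix~\ref{app:solver}: with neighboring masks held fixed, $\mathcal J=\mathrm{const}-\sum_m x_{i,m}G_{i,m}$. Here $G_{i,m}$ is given by Eqs.~\eqref{eq:gain}--\eqref{eq:boundarygain}. I will briefly verify this identity by collecting the terms of $\mathcal J$ that involve $x_{i,\cdot}$:
\begin{itemize}
\item the local term $\lambda_{\mathrm{anc}}(1-x_{i,m})u_{i,m}$;
\item the $B_{l,i}$ and $A_{i,r}$ endpoint terms of the two adjacent intervals;
\item the cross terms $2C(1-x_{i,m})(1-x_{\cdot,m})\sqrt{\cdot}$;
\item the boundary terms.
\end{itemize}
Each of these is affine in $x_{i,m}$ once the other masks are frozen, because $x$ is binary. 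The accept test in line 8 of Algorithm~\ref{alg:routing} requires $\sum_{\mathcal M_i'}G>\sum_{\mathcal M_i}G$. By the identity, this is exactly a strict decrease of $\mathcal J$ at the moment of the update.

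\textbf{Step 3 (termination).} Every accepted change moves to a state with strictly smaller $\mathcal J$, so no state of $\mathcal X$ is visited twice. The number of changes is therefore at most $|\mathcal X|-1$, which is finite. Passes without changes stop the loop. Thus the total work is finite even without the cap $L_s$.

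\textbf{Step 4 (certificate).} Suppose a full directional pass makes no change. Then for every anchor $i$, with the latest neighbor masks (which did not change during the pass), the current $\mathcal M_i$ satisfies $\sum_{\mathcal M_i}G\geq\sum_{\mathcal M_i'}G$. Here $\mathcal M_i'$ is an exact Top-$K$ set, so it maximizes $\sum_{m\in S}G_{i,m}$ over all $K_{\mathrm{KV}}$-subsets $S\subseteq\mathcal C_i$. Hence $\mathcal M_i$ is also a maximizer. By the identity, this means no single-anchor change within $\mathcal C_i$ can lower $\mathcal J$, which is coordinate-wise optimality.

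The main subtlety is that within a pass the neighbor masks used for anchor $i$ may differ from those used for earlier anchors. In an unchanged pass, however, all masks are constant throughout, so each anchor's check refers to the same final configuration. Ties are handled by the strict inequality: a tied Top-$K$ alternative is rejected, which is what prevents cycling among equal-valued states.
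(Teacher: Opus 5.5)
Your proposal is correct and follows the paper's proof. The mask space restricted to the candidate sets has at most $\prod_i\binom{|\mathcal C_i|}{K_{\mathrm{KV}}}$ elements, and every accepted change strictly decreases $\mathcal J$, so no configuration recurs and the process terminates. An unchanged pass means each anchor's current mask already attains the exact Top-$K$ maximum of $\sum_m x_{i,m}G_{i,m}$, which is coordinate-wise optimality. Your explicit check of the identity $\mathcal J=\mathrm{const}-\sum_m x_{i,m}G_{i,m}$, which the paper states just before the proposition, is the link from the gain-sum test in line 8 of Algorithm~\ref{alg:routing} to a strict decrease in $\mathcal J$, a step the paper leaves implicit.
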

\begin{proof}
There are at most $\prod_i\binom{|\mathcal C_i|}{K_{\mathrm{KV}}}$ feasible masks. Every accepted change strictly decreases $\mathcal J$, so no mask configuration can recur. At termination, the best response at every anchor fails to improve the objective while all other masks remain fixed. No feasible single-anchor replacement can therefore improve it. This does not imply a globally optimal joint mask.
\end{proof}
Tie handling is necessary: deterministic tie-breaking alone does not establish strict descent. In floating point, accept only a decrease larger than a reported tolerance, yielding an approximate coordinate-wise condition. Stopping at the directional-pass cap preserves the budget but provides no convergence certificate. In the special cases $K_{\mathrm{KV}}=M$ or $M=0$, historical routing is bypassed; if fewer than $K_{\mathrm{KV}}$ blocks exist, all are kept and the actual available budget is reported.

\subsection{Block Execution Order}
\looseness=-1 Select the anchor set once per block using fresh lagged control scores and updated debt, or debt alone when no fresh score is available. At each eligible visual attention site, form pooled historical statistics and refine masks using Algorithm~\ref{alg:routing}; evaluate anchor attention residuals and reconstruct skipped updates. At each eligible FFN site, compute anchor FFN residuals and reconstruct the remaining updates separately with the same anchors. Every update is added to its own current-frame hidden state. Conditioning and text updates remain dense, all permitted current-frame K/V are constructed, and native causal permissions are preserved. Debt and fresh all-frame control signals then determine the next block's anchors using information already available from the current evaluation.

\subsection{Complexity and Implementation}
\label{sec:complexity}
\label{app:complexity}
Let each current frame contain $S$ tokens of width $D$, each historical block contain $B_v$ tokens, and $N_0$ denote retained current/text KV tokens. Direct current-frame attention changes from $O(N_{\mathrm{cur}}S(N_0+MB_v)D)$ to $O(K_fS(N_0+K_{\mathrm{KV}}B_v)D)$, and directly evaluated current-frame FFNs change from $O(N_{\mathrm{cur}}SD^2)$ to $O(K_fSD^2)$. Any all-frame conditioning or current-KV projections remain explicit costs; these expressions do not imply that the entire block scales by the anchor ratio.

For tiles of at most $B_p$ spatial positions, batched anchor FFTs and skipped-frame inverse FFTs cost $O(N_{\mathrm{cur}}SD\log B_p)$. Pooled scoring costs $O(K_fMD)$, while $L_s$ directional refinement passes over fixed candidate sets of size $M_c$ cost $O(L_sK_fM_c)$ plus Top-$K$ selection. Memory-context processing, the dense warm-up block, KV storage, control scoring, and tensor gathering remain in the total. Historical sparsity reduces attention reads without automatically shrinking the resident KV cache. Boundary extrapolation reuses the first or last adjacent-anchor pair and adds no directly computed visual residuals; unlike residual copying, it incurs spectral reconstruction work for boundary targets. The anchor budget and asymptotic reconstruction cost are unchanged, but measured latency need not be identical. Net wall-clock gains therefore require the runtime accounting in Section~\ref{sec:efficiency}; operation-count savings alone do not establish interactive speed.

\section{Evaluation Details and Additional Analyses}
\label{app:evaluation}
\subsection{Implementation and Evaluation Record}
\label{app:config}
\enlargethispage{\baselineskip}
Qualitative examples use Matrix sample \texttt{fr005\_006} and HY sample \texttt{ts000\_006}.

Tables~\ref{tab:config} and~\ref{tab:config_cast} summarize backbone, CAST, and evaluation settings. Backbone identities and chunk conventions follow the \href{https://github.com/SkyworkAI/Matrix-Game/blob/main/Matrix-Game-3/README.md}{Matrix release}, its \href{https://huggingface.co/Skywork/Matrix-Game-3.0/blob/main/base_model/config.json}{model configuration}, and the \href{https://github.com/Tencent-Hunyuan/HY-WorldPlay/blob/main/README.md}{HY release}. The tables specify hooks, hardware, numerical settings, and evaluators; timing follows Section~\ref{sec:efficiency}. We use the released Light Interaction block-sparse backend with CAST's historical-block selection policy.

\paragraph{Native-chunk compatibility.}
Matrix uses ten current latent frames and dynamically selects five anchors at $r_f=0.50$; HY uses four and dynamically selects two. Both backbones select over the entire current chunk without mandatory endpoints. The anchor set is shared across eligible visual residual sites within each transformer block. Matrix budget experiments use the discrete counts $K_f\in\{3,5,6\}$, corresponding to $r_f\in\{0.30,0.50,0.60\}$.

\paragraph{Benchmark scope and provenance.}
WorldMark uses 500 samples for every method on both backbones. The Light Interaction-style evaluation uses the same 400 trajectories for every method: 200 left--right and 200 forward--backward. All baselines, including Native and Sol Engine, are re-run in our unified environment; the tables contain no scores imported from published result tables. Within each backbone, methods share the evaluator, per-rollout hardware, and timing scope. Results are point estimates without confidence intervals. Speedups in Table~\ref{tab:worldmark} come from our matched runtime evaluation in Table~\ref{tab:lightinteraction}, not the WorldMark trajectories. Non-native rankings exclude Native.

\paragraph{Controls and accounting.}
Paired runs share initial observations, controls, and seeds; calibration and evaluation trajectories are disjoint. Matched-component ablations fix $K_f$, $K_{\mathrm{KV}}$, block layouts, candidate sets, and dense-block schedules. Matrix uses one RTX A6000 per rollout; HY uses two for model/tensor parallel inference. The eight-GPU RTX A6000 server parallelizes independent runs on unused devices. For each of three seeds, timing uses ten warm-up and fifty measured chunks with CUDA synchronization. Mean steady-state latency includes scoring, routing, reconstruction, and retrieval, but excludes VAE decoding and the initial startup phase.

WorldMark's thirteen dimensions comprise translation/rotation variants of direction accuracy, direction purity, motion stability, and response latency; local, global, and revisit memory; and perceptual and aesthetic quality. The Light Interaction protocol reports PSNR, SSIM, and LPIPS against paired native rollouts and under self-comparison, together with VBench. Agreement with Native measures approximation fidelity rather than ground-truth correctness.

\begin{table}[!htbp]
\caption{Backbone and execution configuration ($8\times$ RTX A6000 server).}
\label{tab:config}
\centering\footnotesize
\setlength{\tabcolsep}{3pt}
\renewcommand{\arraystretch}{1.15}
\begin{tabularx}{\linewidth}{@{}>{\raggedright\arraybackslash}p{2.8cm}YY@{}}
\toprule
Field & Matrix-Game 3.0 & HY-World 1.5\\
\midrule
Checkpoint and code provenance & \texttt{Skywork/Matrix-Game-3.0}, \texttt{base\_distilled\_model}; released code with CAST integration & \texttt{tencent/HY-WorldPlay}, \texttt{ar\_distilled\_action\_model}; HunyuanVideo 8B path\\
GPU model and count & $1\times$ RTX A6000 per rollout; batch 1 & $2\times$ RTX A6000 per rollout; model/tensor parallel inference; batch 1\\
Software and attention backend & Python 3.12; PyTorch 2.6; CUDA 12.4; Triton 3.2; FlashAttention 2.7 for dense paths; released Light Interaction block-sparse execution backend & Python 3.10; PyTorch 2.6; CUDA 12.4; Triton 3.2; FlashAttention 2.7 for dense paths; released Light Interaction block-sparse execution backend\\
Model / attention / FFT precision & BF16 model and QKV; FP32 softmax and pooled reductions; FP32 / complex64 FFT; no INT8/FP8 & BF16 model and QKV; FP32 softmax and pooled reductions; FP32 / complex64 FFT; no FP8\\
Resolution and native chunk & $704\times1280$; steady-state 40 decoded / 10 new latent frames; separate 57-frame startup & $480\times832$; steady-state 16 decoded / 4 latent frames; first chunk has 13 decoded frames\\
Denoising schedule & 3 evaluations; retain the released distilled scheduler and its native timestep ordering & 4 evaluations; native few-step AR scheduler; no denoising cache\\
History and KV block layout & Keep native camera-aware retrieval and temporal context; blocks $(t,h,w)=(1,8,8)$ in token coordinates; mask partial blocks & Keep native reconstituted context memory and causal permissions; blocks $(1,8,8)$ in token coordinates; mask partial blocks\\
Control-score hooks / skip scope & Dense control branches where present (released action blocks 0--14); skip only current visual attention/FFN residuals; use debt-only selection where no fresh control branch exists & Dense native action-conditioning path at verified interfaces; skip only current visual attention/FFN residuals; never skip text or conditioning updates\\
Current-frame KV construction & Project K/V for all current visual tokens, retain all permitted current/text KV; sparse Q computation only for selected frames & Project K/V for all current visual tokens; preserve native text/visual masks and permitted current/text KV\\
Dense / sparse schedule & First transformer block dense at every denoising evaluation; all later eligible visual residuals sparse; reset debt per evaluation & First transformer block dense at every denoising evaluation; all later eligible visual residuals sparse; reset debt per evaluation\\
\bottomrule
\end{tabularx}
\end{table}

\begin{table}[!htbp]
\caption{CAST and evaluation settings for both backbones under the shared comparison protocol.}
\label{tab:config_cast}
\centering\footnotesize
\setlength{\tabcolsep}{4pt}
\renewcommand{\arraystretch}{1.15}
\begin{tabularx}{\linewidth}{@{}>{\raggedright\arraybackslash}p{2.8cm}YY@{}}
\toprule
Field & Matrix-Game 3.0 & HY-World 1.5\\
\midrule
$r_f$, $K_f$, $\lambda_{\mathrm{cov}}$ & $0.50$, $5$ of $10$, $0.50$; dynamic Top-5 over all current frames; boundary extrapolation uses the two nearest anchors & $0.50$, $2$ of $4$, $0.50$; dynamic Top-2 over all current frames; boundary extrapolation uses the two nearest anchors\\
Tiles and reliable spectral bins & $8\times8$ spatial token tiles, disjoint; $W_p,|Z_p|>10^{-6}\max W_p+10^{-8}$; FP32 & $8\times8$ spatial token tiles, disjoint; $W_p,|Z_p|>10^{-6}\max W_p+10^{-8}$; FP32\\
$\tau_{\mathrm{low}},\tau_{\mathrm{high}},q,T$ & $(0.25,0.75,2,0.10)$; principal phase branch; DC/Nyquist transport disabled & $(0.25,0.75,2,0.10)$; principal phase branch; DC/Nyquist transport disabled\\
$K_{\mathrm{KV}}$, $r_{\mathrm{KV}}$, $M_c$ & $K_{\mathrm{KV}}=\lceil0.20M\rceil$; target $r_{\mathrm{KV}}=0.20$; $M_c=M$ (full candidate pool); bypass if $M=0$ & $K_{\mathrm{KV}}=\lceil0.20M\rceil$; target $r_{\mathrm{KV}}=0.20$; $M_c=M$; report realized block fraction\\
Projection $P$ and scaling & $P=W_O^{(h)}$, native attention output-projection head slice; no additional learned/sketched projection; FP32 pooled reductions & Same native output-projection head slice and FP32 reductions; squared Euclidean norm\\
Routing weights / stopping & $(\lambda_{\mathrm{anc}},\lambda_{\mathrm{rec}})=(1,1)$; $L_s=2$ directional passes; accept $\Delta J>10^{-6}\max(J,10^{-8})$ & $(\lambda_{\mathrm{anc}},\lambda_{\mathrm{rec}})=(1,1)$; $L_s=2$ directional passes; same relative tolerance\\
Calibration / evaluation / seeds & 32 disjoint calibration trajectories; retain main-text 500 WorldMark / 400 LI counts; paired seed 42; sensitivity seeds 0, 1, 2 & 32 disjoint calibration trajectories; retain main-text 500 WorldMark samples per method and 400 LI counts; same seed plan\\
Trajectories and horizons & Preserve benchmark controls; 12-iteration diagnostic rollout: $57+11\times40=497$ decoded frames; startup excluded from steady-state timing & Preserve benchmark controls; diagnostic horizons 125, 253, 509 decoded frames (32, 64, 128 latent frames)\\
Warm-up and repetitions & 10 unmeasured chunks; 50 timed chunks per seed; 3 seeds; CUDA synchronization; mean steady-state latency & Same repetitions and timing scope; scoring, routing, reconstruction, and retrieval included; VAE decoding excluded; startup excluded\\
Quality/control evaluators & Official VBench and WorldMark protocols; LPIPS-AlexNet; RGB $[0,1]$ for PSNR/SSIM & Same evaluators, revisions, and preprocessing; standalone VGGT diagnostic is separate from WorldMark\\
Uniform-anchor controlled baseline & Table~\ref{tab:ablation}, Variant A: uniform/interleaved anchors, linear residual recovery, independent KV routing & Same controlled ablation definition\\
Sol Engine & Official publicly released implementation, fully re-run in our unified environment & Official publicly released implementation, fully re-run with the same evaluator, hardware, and timing scope as CAST\\
\bottomrule
\end{tabularx}
\end{table}
\FloatBarrier

\subsection{Diagnostic Measurements}
\label{app:diagnostics}
\paragraph{Oracle frame difficulty.}
\looseness=-1 The frame-difficulty diagnostic replays identical block inputs through the dense reference. For an eligible interior frame, its neighboring anchors are fixed, its direct residual computation is omitted, and Eq.~\eqref{eq:sensitivity} measures the resulting error. Figure~\ref{fig:empirical_motivation} compares sensitivity at block $\ell$ with error at $\ell+1$, matching the deployed prediction lag. This is an association diagnostic rather than a causal guarantee for frame selection.

\paragraph{Oracle corrections and phase progression.}
Let $F_t$ and $F_t^{\mathrm{lin}}$ be the spatial Fourier transforms of the reference and interpolated residuals. The two diagnostic oracle corrections are
\begin{equation}
F_t^{\mathrm{phase}}=|F_t^{\mathrm{lin}}|e^{\mathrm{i}\Arg F_t},\qquad
F_t^{\mathrm{mag}}=|F_t|e^{\mathrm{i}\Arg F_t^{\mathrm{lin}}}.
\label{eq:oracles}
\end{equation}
These reference-based diagnostics do not run at inference. For coherent constant-velocity translation,
\begin{equation}
R_b(x)=R_a(x-d)\ \Longrightarrow\ F_b(\omega)=F_a(\omega)e^{-\mathrm{i}\omega^\top d},\qquad
F_t(\omega)=F_a(\omega)e^{-\mathrm{i}\alpha\omega^\top d}.
\label{eq:shift}
\end{equation}
This implies linear unwrapped phase at every frequency, not preferential low-frequency linearity. To compare trajectories in Figure~\ref{fig:spectral_phase_motivation}, we fix one coefficient within a channel and spatial region and define
\begin{equation}
p_{t,\omega}=\frac{\tilde\phi_t(\omega)-\tilde\phi_a(\omega)}{\tilde\phi_b(\omega)-\tilde\phi_a(\omega)}.
\label{eq:phase_progress}
\end{equation}
\looseness=-1 Uniform progression gives $p_{t,\omega}=\alpha$, with endpoints fixed to zero and one. Negative values indicate motion opposite to the net endpoint phase displacement. Weak-amplitude coefficients, near-zero endpoint phase differences, and inconsistent phase branches can distort this normalization. Temporal comparisons in Figure~\ref{fig:empirical_motivation} independently rescale the two curves and preserve the $\ell$-to-$\ell+1$ lag.

\paragraph{Spectral decomposition.}
For radial band $\mathcal B$, the absolute error-energy fraction is
\begin{equation}
\eta(\mathcal B)=\frac{\sum_{p,c,\omega\in\mathcal B}\kappa_\omega|F_{t,p,c}-F^{\mathrm{lin}}_{t,p,c}|^2}
{\sum_{p,c,\omega}\kappa_\omega|F_{t,p,c}-F^{\mathrm{lin}}_{t,p,c}|^2+\epsilon}.
\label{eq:bandenergy}
\end{equation}
Oracle phase and magnitude corrections use Eq.~\eqref{eq:oracles} and the same tile/FFT conventions. Phase is undefined for zero-energy spectra. Let $\phi_{i,p,c}=\Arg F_{i,p,c}$; the transport diagnostic measures the energy-weighted circular distance
\begin{equation}
\left|\wrap_{[-\pi,\pi)}\left(\phi_{t,p,c}-\phi_{a,p,c}-\alpha\theta_p\right)\right|.
\label{eq:phaseerror}
\end{equation}

\paragraph{Routing diagnostics.}
With the anchor set, candidate pool, and per-anchor KV budget fixed, Table~\ref{tab:coupling_diagnostic} compares independent omission-importance Top-$K$ masks with coupled masks against a dense-history reference. Anchor perturbations and interval reconstruction errors are measured on identical block inputs, separating the immediate approximation from subsequent rollout drift. The objective is a frozen-transport surrogate; the reported errors measure the resulting residual approximations rather than certify the bound for the complete nonlinear inference process.

\paragraph{Generation quality and controls.}
\looseness=-1 The quality score $Q$ uses the official VBench quality aggregation~\citep{huang2023vbench}. Paired RGB PSNR uses a declared intensity range and the same initial observation, control sequence, and random seed. It measures closeness to the dense generator, which can itself be wrong. Fixed-input residual diagnostics isolate approximation effects from subsequent context drift accumulated over later chunks in free-running rollouts.

The camera-control diagnostic compares inferred and commanded relative rotations in a common coordinate system. The rotation error is the geodesic angle $\arccos(\operatorname{clip}((\operatorname{tr}(R_{\mathrm{cmd}}^\top R_{\mathrm{est}})-1)/2,-1,1))$. Translation-direction error is reported only for commands with nonzero movement. The complementary camera-pose diagnostic uses \href{https://github.com/facebookresearch/vggt}{VGGT-1B}~\citep{wang2025vggt}. Its OpenCV world-to-camera matrices are converted to camera-to-world poses, commanded poses are transformed to the same axes, and both trajectories are expressed relative to the first camera. Relative rotations and unit translation directions are evaluated in degrees, without fitting a similarity transform to each generated rollout. Translation-direction scoring requires finite poses, a proper orthonormal rotation (tolerance $10^{-3}$), and nonzero translation norm ($>10^{-6}$ in normalized coordinates). The valid-pose rates are $94.6\%$ (Matrix) and $96.2\%$ (HY). This diagnostic complements the official WorldMark evaluation. Visual quality alone does not establish discrete action consistency. Revisit consistency assesses accumulated drift. These protocols assess control adherence and accumulated drift alongside the benchmark quality metrics.

\paragraph{Latency measurement.}
We report mean steady-state chunk latency over fifty timed chunks per seed and three seeds, following ten unmeasured warm-up chunks. CUDA synchronization brackets the measured execution, including retrieval, scoring, routing, and reconstruction while excluding VAE decoding; startup is excluded. Speedup divides matched Native latency by method latency at identical resolution, chunk size, and per-backbone hardware. Generation throughput is decoded frames per measured time and does not by itself equal the interaction response rate. Quality comparisons remain point estimates without confidence intervals for the reported evaluation samples.

\subsection{Additional Ablations and Qualitative Comparisons}
\label{app:ablations}

\subsubsection{Controlled Design Alternatives}
The main paper retains the progressive component table; the following one-factor comparisons separate the choice of selector and reconstruction operator under the same end-to-end evaluation.

\paragraph{Controlled frame-selection and reconstruction ablations.}
 Following the standard one-factor-at-a-time organization used in diffusion-acceleration studies, Table~\ref{tab:reconstruction_ablation} changes one design choice while holding the rest of the pipeline fixed. In the first panel, all selectors use \fc{} and coupled routing; in the second, all reconstruction operators use control-aware anchors and coupled routing. Each row is evaluated end-to-end with the same metrics, rather than mixing selector-only and reconstruction-only diagnostics with different evaluation targets in one set of columns.
 
 \begin{table}[H]
 \caption{Controlled frame-selection and reconstruction ablations on Matrix-Game 3.0. Every row uses $r_f=0.50$, $r_{\mathrm{KV}}=0.20$, and the same end-to-end evaluation. T-Motion, Local, Global, and Revisit are representative WorldMark scores; all are higher-is-better.}
 \label{tab:reconstruction_ablation}
 \centering\small\renewcommand{\arraystretch}{1.15}
 \setlength{\tabcolsep}{2.5pt}
 \begin{tabular*}{\linewidth}{@{\extracolsep{\fill}}lrrrrrr@{}}
 \toprule
 & & & \multicolumn{4}{c}{WorldMark$\uparrow$}\\
 \cmidrule(lr){4-7}
 Variant & VBench$\uparrow$ & LPIPS$\downarrow$ & T-Motion & Local & Global & Revisit\\
 \midrule
 \multicolumn{7}{@{}l}{\textit{Frame selection; \fc{} and coupled routing fixed}}\\
 Uniform/interleaved & 0.7437 & 0.4884 & 76.914 & 73.478 & 42.806 & 77.138\\
 Latent-motion magnitude & 0.7506 & 0.4769 & 79.312 & 75.087 & 44.578 & 79.341\\
 Control sensitivity & \textbf{0.7570} & \textbf{0.4657} & \textbf{79.830} & \textbf{76.531} & \textbf{46.645} & \textbf{81.514}\\
 \midrule
 \multicolumn{7}{@{}l}{\textit{Reconstruction; control-aware selection and coupled routing fixed}}\\
 Linear interpolation & 0.7392 & 0.4948 & 77.103 & 73.337 & 42.164 & 76.782\\
 Phase transport without confidence & 0.7495 & 0.4781 & 78.864 & 75.008 & 44.612 & 79.203\\
 Full \fc{} & \textbf{0.7570} & \textbf{0.4657} & \textbf{79.830} & \textbf{76.531} & \textbf{46.645} & \textbf{81.514}\\
 \bottomrule
 \end{tabular*}
 \end{table}
 
 With reconstruction and routing fixed, control sensitivity improves VBench by 0.0133 over uniform/interleaved selection, reduces LPIPS by 0.0227, and raises global and revisit memory by 3.839 and 4.376 points. Latent-motion magnitude is consistently intermediate, suggesting that generic motion is useful but does not fully capture action-conditioned frame importance. With selection and routing fixed, full \fc{} improves VBench by 0.0178 over linear interpolation, reduces LPIPS by 0.0291, and gains 4.481 global-memory and 4.732 revisit-memory points. Phase transport without confidence recovers part of this gap, while the remaining gain from frequency-dependent confidence is visible across both visual and memory metrics. Residual-domain measurements are reported separately in Appendix~\ref{app:diagnostics}; these tables use uniform end-to-end metrics for direct comparison.

\subsubsection{Fixed-Input Coupling Diagnostics}
\paragraph{Does reconstruction coupling add value?}
 We compare C and D on identical block inputs with the same control-aware anchors, historical candidate pool, and per-anchor KV cardinality. Independent routing retains the Top-$K$ blocks under the local omission score $u_{i,m}$; coupled routing starts from those masks and applies one forward and one backward directional refinement pass using the reconstruction-coupled objective in Eq.~\eqref{eq:objective}. This changes which historical blocks are retained without changing the 20\% historical KV retention budget at any anchor.
 
 \begin{table}[H]
 \caption{Fixed-input coupling ablation on Matrix-Game 3.0. Anchor and interval errors are normalized residual errors under identical anchors and KV budgets. Local, Global, and Revisit are WorldMark memory scores measured on the paired rollouts for both routing variants.}
 \label{tab:coupling_diagnostic}
 \centering\small\renewcommand{\arraystretch}{1.15}
 \setlength{\tabcolsep}{3pt}
 \begin{tabular*}{\linewidth}{@{\extracolsep{\fill}}lrrrrrr@{}}
 \toprule
 KV routing & Anchor error$\downarrow$ & Interval error$\downarrow$ & VBench$\uparrow$ & Local$\uparrow$ & Global$\uparrow$ & Revisit$\uparrow$\\
 \midrule
 Independent (C) & 0.0831 & 0.1127 & 0.7479 & 75.046 & 44.387 & 79.206\\
 Coupled, $L_s=2$ (D) & \textbf{0.0774} & \textbf{0.1013} & \textbf{0.7570} & \textbf{76.531} & \textbf{46.645} & \textbf{81.514}\\
 \bottomrule
 \end{tabular*}
 \end{table}
 
\looseness=-1 Coupling reduces anchor-local error by 6.9\% and interval reconstruction error by 10.1\%. The larger reduction between anchors is accompanied by gains of 1.485, 2.258, and 2.308 points in local, global, and revisit memory. This pattern supports the reconstruction-risk objective: coupled routing selects historical evidence that is useful both to each endpoint and to the frames reconstructed between adjacent endpoints. C and D share the same budget, ruling out additional KV capacity as the cause.

\subsubsection{Budget and Refinement Sensitivity}
\label{app:budget_sweep}
\paragraph{Sensitivity to computation budgets.}
 We vary the retained-frame ratio $r_f=K_f/N_{\mathrm{cur}}$ and retained-history ratio $r_{\mathrm{KV}}=K_{\mathrm{KV}}/M$ one at a time on Matrix-Game 3.0, while fixing $L_s=2$. Table~\ref{tab:budget_sensitivity} reports both visual quality and the WorldMark global and revisit memory scores most directly affected by current-frame and historical-context budgets in these comparisons.
 
 \begin{table}[H]
 \caption{Budget sensitivity on Matrix-Game 3.0 with two directional refinement passes. Ratios are retained fractions; Global and Revisit are WorldMark memory scores, and latency is milliseconds per chunk. All settings use identical prompts, actions, and paired random seeds.}
 \label{tab:budget_sensitivity}
 \centering\small\renewcommand{\arraystretch}{1.15}
 \setlength{\tabcolsep}{2.7pt}
 \begin{tabular*}{\linewidth}{@{\extracolsep{\fill}}ccrrrrrr@{}}
 \toprule
 & & & & \multicolumn{2}{c}{WorldMark$\uparrow$} & & \\
 \cmidrule(lr){5-6}
 $r_f$ & $r_{\mathrm{KV}}$ & VBench$\uparrow$ & LPIPS$\downarrow$ & Global & Revisit & Latency$\downarrow$ & Speedup$\uparrow$\\
 \midrule
 0.30 & 0.20 & 0.7441 & 0.4936 & 42.738 & 77.246 & 1,717 & $2.52\times$\\
 0.50 & 0.10 & 0.7512 & 0.4788 & 42.911 & 77.926 & 1,909 & $2.26\times$\\
 \textbf{0.50} & \textbf{0.20} & \textbf{0.7570} & \textbf{0.4657} & \textbf{46.645} & \textbf{81.514} & \textbf{2,007} & $\mathbf{2.15\times}$\\
 0.50 & 0.30 & 0.7584 & 0.4608 & 47.386 & 82.263 & 2,078 & $2.08\times$\\
 0.60 & 0.20 & 0.7601 & 0.4529 & 47.219 & 82.487 & 2,363 & $1.83\times$\\
 \bottomrule
 \end{tabular*}
 \end{table}
 
 Reducing $r_f$ from 0.50 to 0.30 saves 290 ms but lowers VBench by 0.0129 and reduces global and revisit memory by 3.907 and 4.268 points. Raising $r_f$ to 0.60 provides only 0.0031 additional VBench and less than one point of memory improvement, while reducing speedup from $2.15\times$ to $1.83\times$. Historical context shows a similar saturation pattern. Retaining 10\% rather than 20\% of historical KV blocks loses 3.734 global-memory and 3.588 revisit-memory points, whereas increasing the ratio from 20\% to 30\% adds 71 ms for gains of only 0.741 and 0.749 points. We therefore use $(r_f,r_{\mathrm{KV}})=(0.50,0.20)$ as the default operating point.

\paragraph{Number of directional refinement passes.}
 We finally vary $L_s$ at $(r_f,r_{\mathrm{KV}})=(0.50,0.20)$. Here, $L_s=0$ uses the independent per-anchor Top-$K$ initialization, $L_s=1$ applies one forward directional refinement pass, $L_s=2$ applies one forward followed by one backward directional refinement pass, and $L_s=4$ repeats this pair twice. Each pass visits all anchors once, alternating forward and backward directions across the temporally ordered anchor sequence.
 
 \begin{table}[H]
 \caption{Directional refinement sensitivity on Matrix-Game 3.0 at $r_f=0.50$ and $r_{\mathrm{KV}}=0.20$. Global and Revisit are WorldMark memory scores; interval NRMSE is measured against fully computed residuals, and latency is milliseconds per chunk.}
 \label{tab:sweep_sensitivity}
 \centering\small\renewcommand{\arraystretch}{1.15}
 \setlength{\tabcolsep}{2.5pt}
 \begin{tabular*}{\linewidth}{@{\extracolsep{\fill}}crrrrrrr@{}}
 \toprule
 & & & \multicolumn{2}{c}{WorldMark$\uparrow$} & & & \\
 \cmidrule(lr){4-5}
 $L_s$ & VBench$\uparrow$ & LPIPS$\downarrow$ & Global & Revisit & Interval NRMSE$\downarrow$ & Latency$\downarrow$ & Speedup$\uparrow$\\
 \midrule
 0 & 0.7479 & 0.4776 & 44.387 & 79.206 & 0.1127 & 1,976 & $2.19\times$\\
 1 & 0.7549 & 0.4689 & 45.916 & 80.721 & 0.1058 & 1,991 & $2.17\times$\\
 \textbf{2} & \textbf{0.7570} & \textbf{0.4657} & \textbf{46.645} & \textbf{81.514} & \textbf{0.1013} & \textbf{2,007} & $\mathbf{2.15\times}$\\
 4 & 0.7574 & 0.4649 & 46.781 & 81.672 & 0.1008 & 2,038 & $2.12\times$\\
 \bottomrule
 \end{tabular*}
 \end{table}
 
 The first two directional refinement passes recover most of the quality and memory lost by independent routing. Moving from $L_s=0$ to $L_s=2$ raises VBench by 0.0091, improves global and revisit memory by 2.258 and 2.308 points, and reduces interval NRMSE by 10.1\%, for 31 ms of additional latency. Increasing $L_s$ from 2 to 4 yields only 0.0004 VBench, 0.136 global-memory points, 0.158 revisit-memory points, and a 0.5\% relative interval-error improvement while reducing speedup to $2.12\times$. This saturation supports the fixed two-pass schedule used in the main experiments.

\subsubsection{Additional Qualitative Comparisons}
Figures~\ref{fig:appendix_qualitative}--\ref{fig:appendix_qualitative_5} extend Figure~\ref{fig:qualitative_baselines} with five additional samples per backbone. Each panel compares Native, SVG, TeaCache, Light Interaction (LI), and \method{} at matched frame indices under the same control sequence. These selected rollouts illustrate scene evolution and action reversals; aggregate comparisons are reported in the main experiments.

\FloatBarrier
\setlength{\floatsep}{3pt plus 1pt minus 1pt}
\setlength{\textfloatsep}{3pt plus 1pt minus 1pt}
\captionsetup[figure]{font=footnotesize,skip=2pt}
\begin{figure}[H]
\centering
\includegraphics[width=\textwidth]{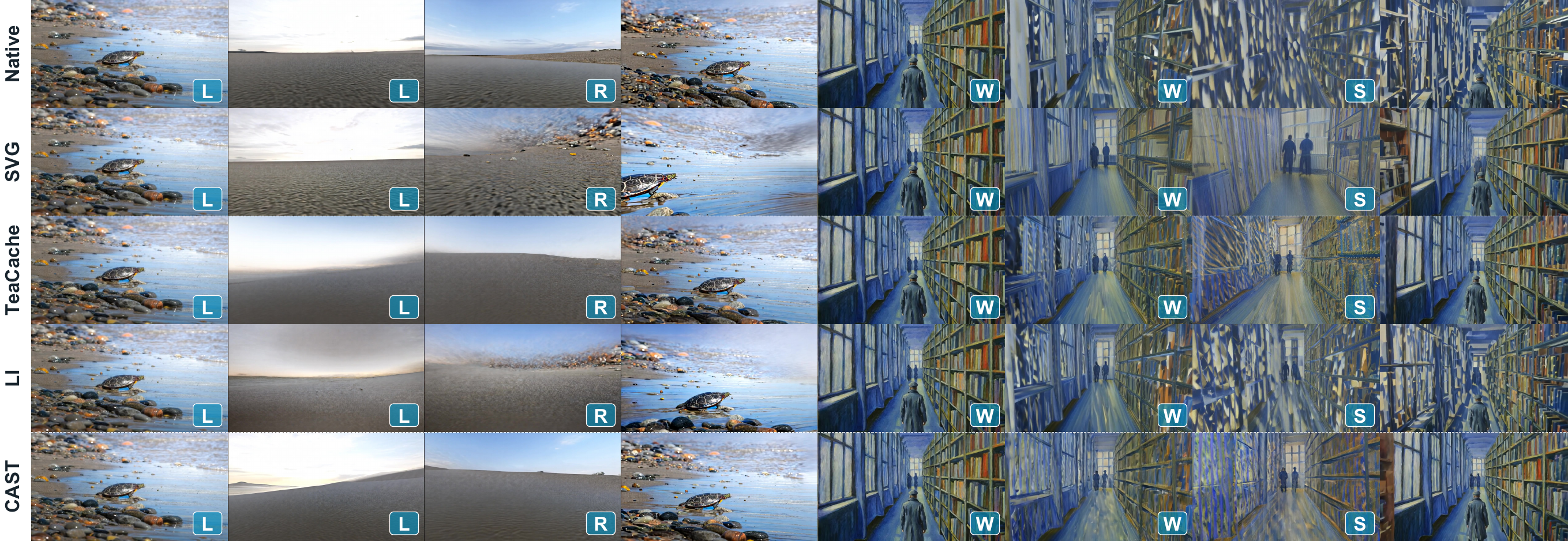}
\caption{WorldMark rollouts: Matrix-Game 3.0, \texttt{fr012\_007} (L/R, left), and HY-World 1.5, \texttt{ts018\_006} (W/S, right). All methods use matched frame indices. Columns show the initial view, the midpoint and end of the first action, and the end of the second action, including its reversal of the initial command.}
\label{fig:appendix_qualitative}
\end{figure}

\begin{figure}[H]
\centering
\includegraphics[width=\textwidth]{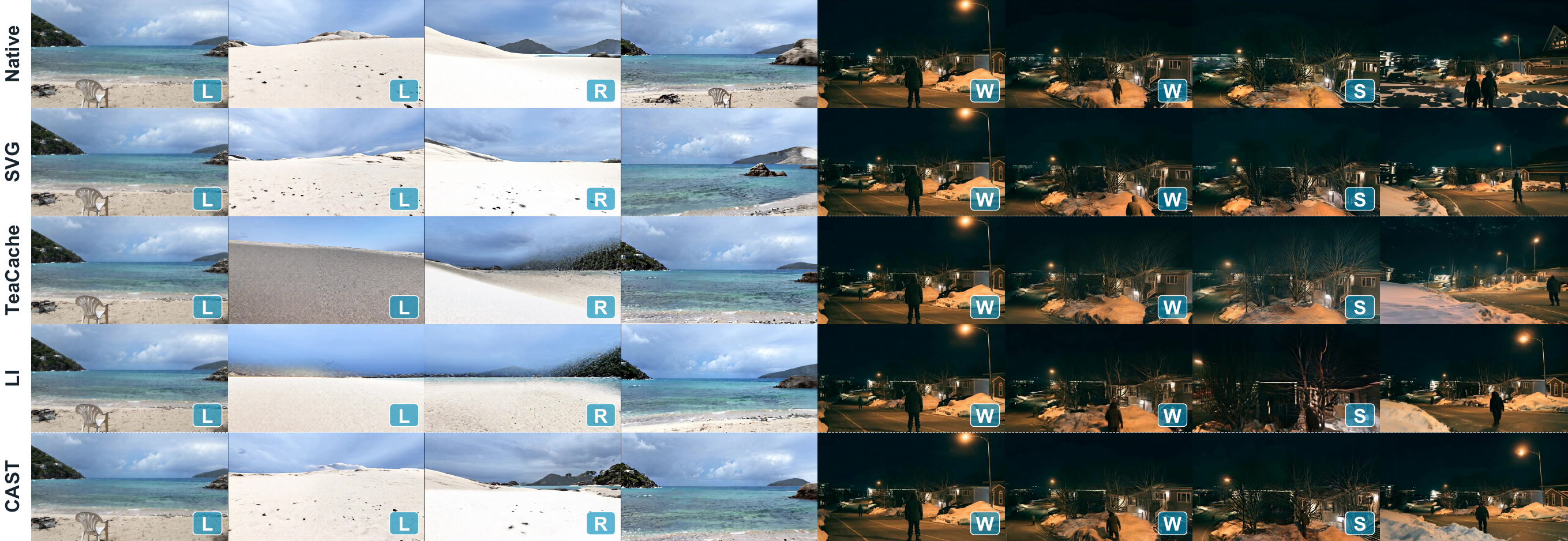}
\caption{WorldMark rollouts: Matrix-Game 3.0, \texttt{fr019\_007} (L/R, left), and HY-World 1.5, \texttt{tr006\_006} (W/S, right). Methods and frame selection follow Figure~\ref{fig:appendix_qualitative}.}
\label{fig:appendix_qualitative_2}
\end{figure}

\begin{figure}[H]
\centering
\includegraphics[width=\textwidth]{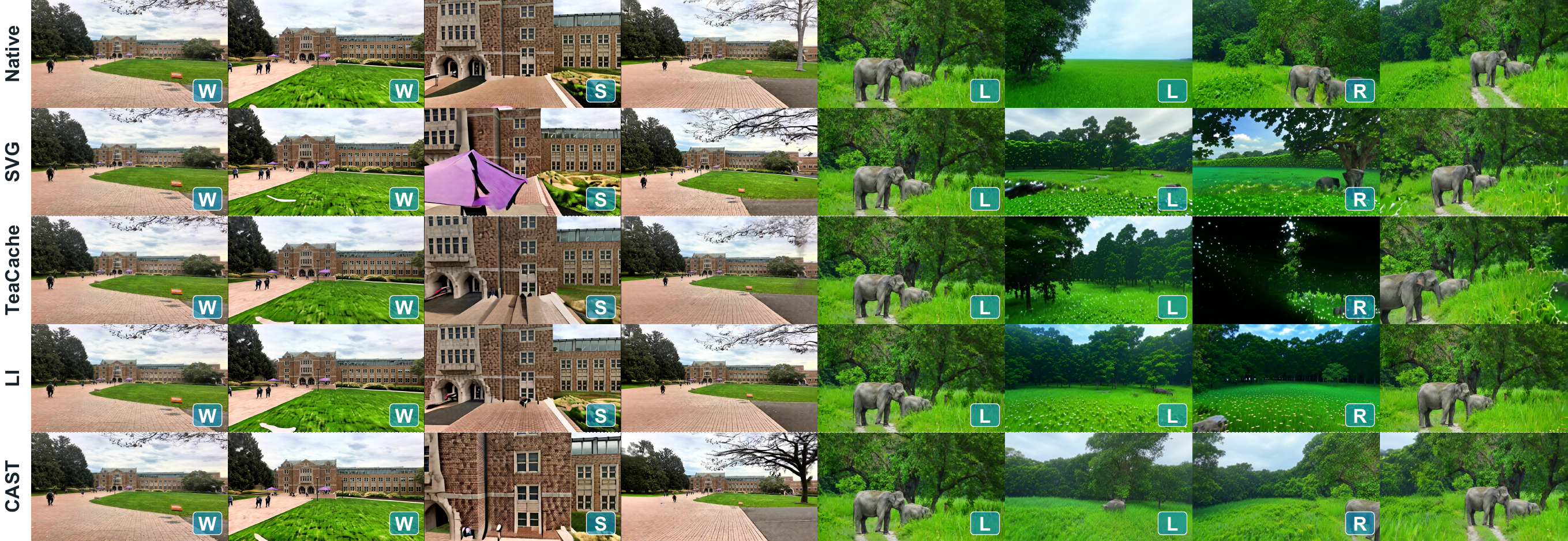}
\caption{WorldMark rollouts: Matrix-Game 3.0, \texttt{fr007\_006} (W/S, left), and HY-World 1.5, \texttt{fr018\_007} (L/R, right). Methods and frame selection follow Figure~\ref{fig:appendix_qualitative}.}
\label{fig:appendix_qualitative_3}
\end{figure}

\begin{figure}[H]
\centering
\includegraphics[width=\textwidth]{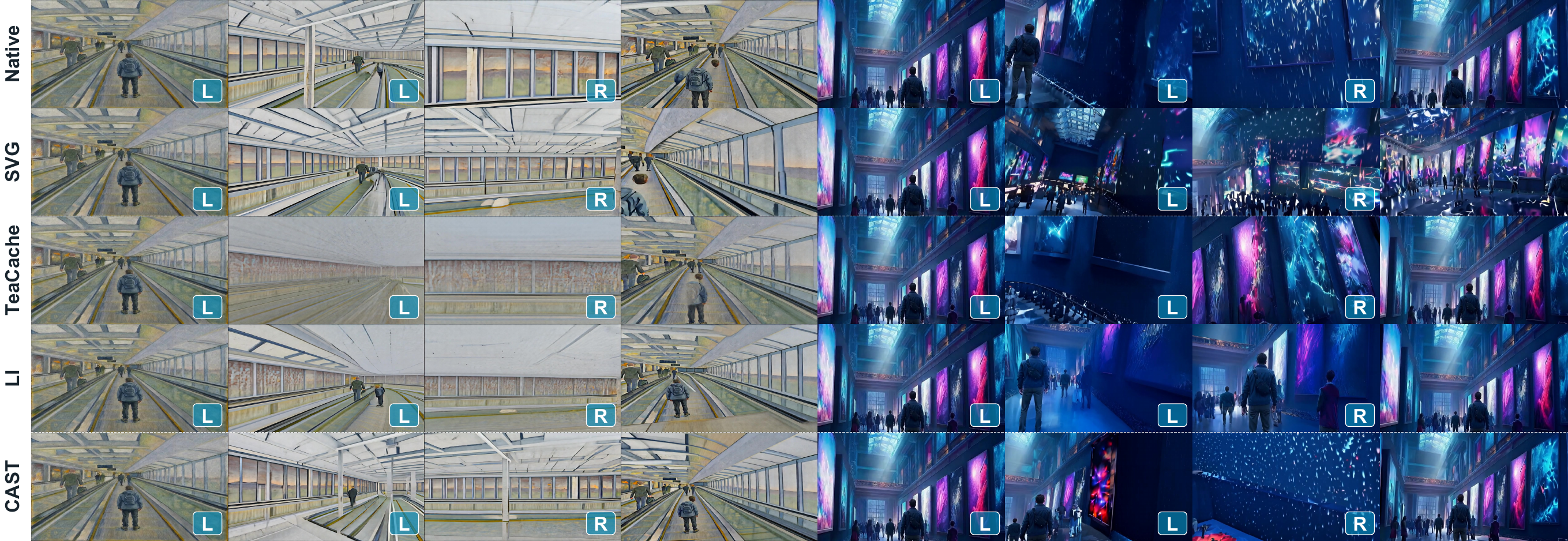}
\caption{WorldMark rollouts: Matrix-Game 3.0, \texttt{ts020\_007} (L/R, left), and HY-World 1.5, \texttt{ts024\_007} (L/R, right). Methods and frame selection follow Figure~\ref{fig:appendix_qualitative}.}
\label{fig:appendix_qualitative_4}
\end{figure}

\begin{figure}[H]
\centering
\includegraphics[width=\textwidth]{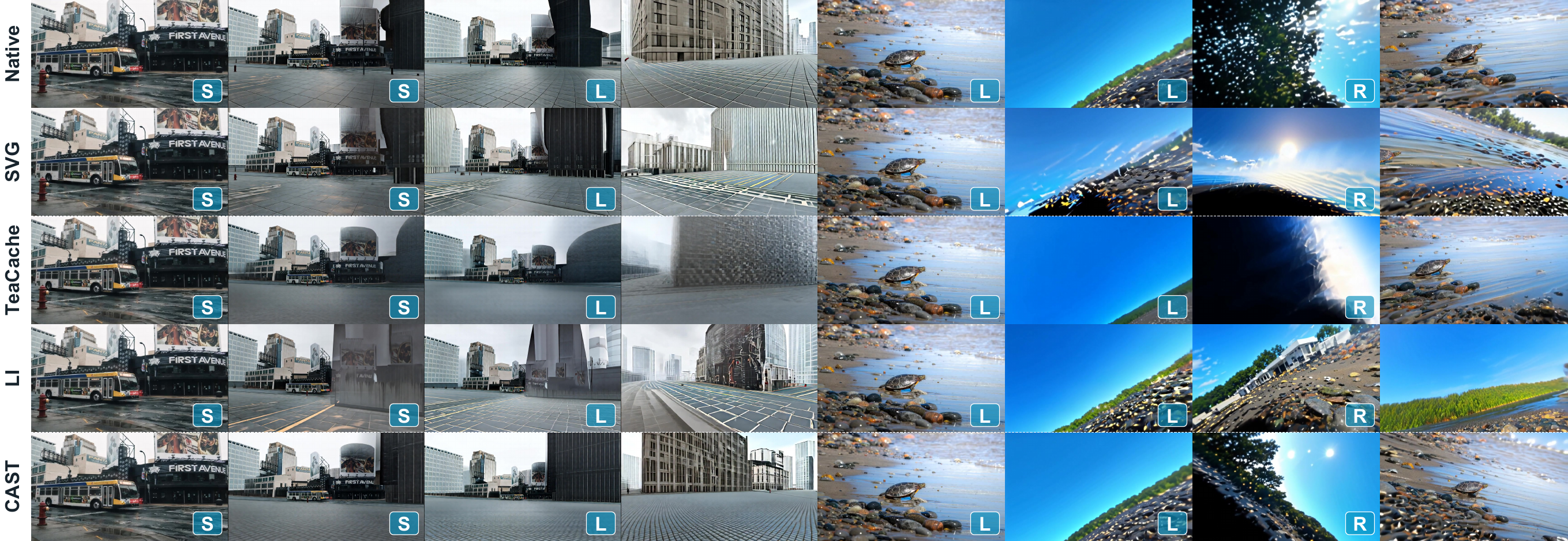}
\caption{WorldMark rollouts: Matrix-Game 3.0, \texttt{fr003\_010} (S/L, left), and HY-World 1.5, \texttt{fr012\_007} (L/R, right). Methods and frame selection follow Figure~\ref{fig:appendix_qualitative}.}
\label{fig:appendix_qualitative_5}
\end{figure}

\FloatBarrier

\end{document}